\documentclass[lettersize,journal]{IEEEtran}
\usepackage{amsmath, amsthm, amssymb, amsfonts}
\usepackage{dcolumn}
\usepackage{bm}
\usepackage{enumitem}
\usepackage{algorithmic}
\usepackage{algorithm}
\usepackage{array}
\usepackage[caption=false,font=normalsize,labelfont=sf,textfont=sf]{subfig}
\usepackage{textcomp}
\usepackage{stfloats}
\usepackage{url}
\usepackage{verbatim}
\usepackage{graphicx}
\usepackage{cite}
\usepackage{hyperref}
\usepackage{tikz}
\usepackage{balance}
\usetikzlibrary{positioning}

\newtheorem{proposition}{Proposition}
\newtheorem{definition}{Definition}

\begin{document}

\title{Efficient Generative Modeling with Unitary Matrix Product States Using Riemannian Optimization}

\author{Haotong Duan$^{*}$, Zhongming Chen$^{*}$, Ngai Wong$^\dagger$

\thanks{Corresponding author: Zhongming Chen.}
\thanks{$^*$Department of Mathematics, School of Sciences, Hangzhou Dianzi University, Hangzhou 310018, China. Email: \{htduan, zmchen\}@hdu.edu.cn.}
\thanks{$^\dagger$Department of Electrical and Electronic Engineering, The University of Hong Kong, Hong Kong. Email: nwong@eee.hku.hk.}
}



\maketitle
\IEEEpubidadjcol

\begin{abstract}
Tensor networks, which are originally developed for characterizing complex quantum many-body systems, have recently emerged as a powerful framework for capturing high-dimensional probability distributions with strong physical interpretability. This paper systematically studies matrix product states (MPS) for generative modeling and shows that unitary MPS, which is a tensor-network architecture that is both simple and expressive, offers clear benefits for unsupervised learning by reducing ambiguity in parameter updates and improving efficiency. To overcome the inefficiency of standard gradient-based MPS training, we develop a Riemannian optimization approach that casts probabilistic modeling as an optimization problem with manifold constraints, and further derive an efficient space-decoupling algorithm. Experiments on Bars-and-Stripes and EMNIST datasets demonstrate fast adaptation to data structure, stable updates, and strong performance while maintaining the efficiency and expressive power of MPS.
\end{abstract}

\begin{IEEEkeywords}
generative model, unitary MPS, Riemannian optimization, space-decoupling.
\end{IEEEkeywords}

\section{\label{sec:level1}Introduction}
\IEEEPARstart{T}{ensor} networks were originally developed in condensed matter physics to accurately characterize the wave functions and entanglement structures of quantum many-body systems \cite{eisert2010colloquium,hastings2007area}. The seminal survey by Kolda and Bader \cite{kolda2009tensor} systematically introduced tensor decompositions into scientific computing and data analysis. The models such as the tensor train (TT) decomposition which is also known as the matrix product state (MPS) \cite{oseledets2011tensor} were brought into the machine learning community \cite{chen2018tensor, liu2023tensor}. In recent years, tensor networks have received growing attention at the interface of machine learning and quantum computing due to their strong expressive capacity in representing high-dimensional data and their efficiency in simulating quantum many-body systems \cite{orus2019tensor}. These advantages make tensor networks a promising paradigm for addressing the limitations of conventional generative models, which often struggle with high computational complexity and limited interpretability when processing high-dimensional or quantum-structured data.

Generative modeling, which is a typical kind of unsupervised learning that makes use of a huge amount of unlabeled data, lies at the heart of the rapid development of modern machine learning techniques.
Classical generative models such as Boltzmann machines \cite{ackley1985learning,cheng2018information} and generative adversarial networks (GANs) \cite{goodfellow2014generative} exhibit inherent scalability challenges when confronted with extremely high-dimensional data. This has motivated increasing interest in tensor network–based generative frameworks. A rigorous correspondence between probabilistic modeling and the MPS formalism was established in \cite{stokes2019probabilistic}, laying the theoretical foundation for using MPS as a probabilistic generative model. Early studies introduced MPS for dimensionality reduction and large-scale optimization \cite{cichocki2016tensor}. Subsequent empirical investigations further revealed that MPS achieves generative performance comparable to restricted Boltzmann machines (RBMs) on benchmark datasets such as MNIST and Bars-and-Stripes \cite{han2018unsupervised,li2018shortcut}. In addition, more recent theoretical work has demonstrated the strong expressive power of MPS when viewed through the lens of probabilistic modeling \cite{glasser2019expressive}.

Beyond the standard MPS structure, a variety of tensor network-based generative models have been explored. The tree tensor network (TTN) has been shown to effectively capture hierarchical data dependencies \cite{cheng2019tree,shi2006classical}. Two-dimensional tensor networks such as projected entangled pair states (PEPS) \cite{vieijra2022generative} and hierarchical tree architectures \cite{liu2019machine} naturally align with image modeling and completion tasks. The deep tree tensor network further integrates hierarchical structures with deep learning mechanisms, yielding improved representational capacity for computer vision tasks \cite{nie2025deep}. Meanwhile, novel tensor decompositions including tensor ring and tensor star models offer efficient tools for high-dimensional data compression and completion \cite{zhou2024tensor}, highlighting their potential as generative modeling frameworks.

Algorithmic advancements are essential for making tensor network-based generative models practically viable. Riemannian optimization has emerged as a powerful tool for solving constrained optimization problems on manifolds, providing significant improvements in training stability and efficiency. This framework generalizes classical optimization techniques from Euclidean spaces to Riemannian manifolds with intrinsic geometric structures \cite{uschmajew2020geometric,yang2025space}. Foundational works such as \cite{absil2008optimization} provide detailed theoretical treatments of matrix manifold algorithms and illustrate how geometric structures can be exploited to design stable and efficient optimization procedures. Many machine learning models naturally impose manifold constraints on their parameters such as the Stiefel manifold \cite{klemetsen2022neural}, the unit sphere \cite{peng2025normalized}, and low-rank tensor manifolds \cite{saragadam2024deeptensor,jacob2020structured}. In complex generative models, however, a single manifold constraint is often insufficient. Parameterization-based approaches can preserve manifold structure during optimization \cite{yang2025space}, while Riemannian optimization avoids the computational overhead and instability associated with Euclidean projection-based methods.


When training MPS-based generative models, local probabilities are determined by the relative values of normalization coefficients induced by the optimized MPS tensors. Under unconstrained optimization \cite{han2018unsupervised}, these coefficients might oscillate or adapt slowly because some gradient directions mainly rescale the entire MPS and thus do not affect local probabilities.
To address this, we propose a unitary MPS-based generative model that enforces a unit-sphere constraint, limiting optimization to directions that adjust the relative weights among MPS cores.
Combining a DMRG-inspired update scheme \cite{white1992density} with a space-decoupled strategy \cite{yang2025space} enables efficient updates at the intersections of tensor manifolds and supports independent, parallel optimization of MPS cores. Experiments show that, relative to alternating Euclidean gradient descent, the proposed Riemannian method maintains manifold constraints throughout training, yielding a more direct descent trajectory, fewer boundary oscillations, and markedly improved stability and efficiency.
The main contributions of this paper are:
\begin{itemize}
    \item A unitary MPS framework for generative modeling is proposed to remove global scaling degrees of freedom while enforcing tensor-norm or orthogonality constraints. 
    \item An efficient manifold-optimization approach is developed by combining DMRG-inspired updates with a space-decoupled strategy, applying Riemannian optimization at manifold intersections to enable parallel core updates.
    \item Strong generation performance is demonstrated, validating unitary MPS advantages in convergence stability, efficiency, and generation quality.
\end{itemize}

The remainder of this paper is organized as follows. Section~II introduces the necessary theoretical background, including the MPS representation and fundamental concepts in manifold optimization. Section~III presents the overall MPS-based generative modeling framework, formulates the associated optimization problem, and describes the proposed space-decoupled optimization algorithm. Section~IV first presents experimental results on the Bars-and-Stripes (BAS) dataset to evaluate the convergence speed and generative capability of the proposed method. It then reports results on the EMNIST dataset, where the proposed approach is compared with the baseline in \cite{han2018unsupervised}, with emphasis on convergence speed and generative performance. Section~V discusses future directions for tensor network–based generative modeling and analyzes the potential of Riemannian optimization in addressing related challenges.

\section{\label{sec:level2}Preliminaries}
This section provides a concise review of the theoretical background underpinning this work, including basic tensor operations, the matrix product state (MPS) representation, fundamental principles of Riemannian optimization, and the associated manifold structures. These concepts collectively establish the foundation for the model formulation and optimization strategies introduced in the subsequent sections.

\subsection{Tensor Preliminaries}
We first introduce some basic tensor notations used throughout this work.
A $d$th-order tensor is denoted by 
$\mathcal{A} \in \mathbb{R}^{n_1 \times n_2 \times \cdots \times n_d}$,
whose $(i_1, i_2, \ldots, i_d)$th element is written as 
$\mathcal{A}(i_1, i_2, \ldots, i_d)$,
where $i_k = 1, \ldots, n_k$ and $k = 1, \ldots, d$.
Here, $d$ is referred to as the order of the tensor, and $n_k$ as the dimension
of its $k$th mode. 
A first-order tensor corresponds to a vector, and a second-order tensor corresponds to a matrix. 

\begin{definition}[Frobenius norm]\label{def:frobenius}
For any two tensors 
$\mathcal{A}, \mathcal{B} \in \mathbb{R}^{n_1 \times \cdots \times n_d}$,
their inner product is defined as
\begin{equation}
\langle \mathcal{A}, \mathcal{B} \rangle 
= \sum_{i_1=1}^{n_1} \cdots \sum_{i_d=1}^{n_d} \mathcal{A}(i_1, \ldots, i_d) \, \mathcal{B}(i_1, \ldots, i_d).
\label{eq:inner}
\end{equation}
The corresponding Frobenius norm is given by
\begin{equation}
\| \mathcal{A} \|_F = 
\sqrt{ \langle \mathcal{A}, \mathcal{A} \rangle }.
\label{eq:frobenius}
\end{equation}
\end{definition}

\begin{definition}[Outer product]
For vectors $\mathbf{a}^{(k)} \in \mathbb{R}^{n_k}$, $k=1,\ldots,d$, their outer product is denoted by $\mathcal{C} = \mathbf{a}^{(1)} \circ \cdots \circ \mathbf{a}^{(d)} \in \mathbb{R}^{n_1 \times \cdots \times n_d}$ such that
$$\mathcal{C}(i_1, \ldots, i_d) = \mathbf{a}^{(1)}(i_1) \cdots \mathbf{a}^{(d)}(i_d).$$
\end{definition}

\begin{definition}[Tensor train decomposition]\label{def:TT}
Given a $d$th-order tensor $\mathcal{A} \in \mathbb{R}^{n_1 \times \cdots \times n_d}$,
the tensor train (TT) decomposition, also known as the matrix product state (MPS) representation,
factorizes $\mathcal{A}$ into a sequence of $d$ third-order tensors such that
\begin{equation*}
\mathcal{A}=\sum_{\alpha_{0}=1}^{r_0} \cdots \sum_{\alpha_{d}=1}^{r_d} A^{(1)}\left(\alpha_{0}, :, \alpha_{1}\right) \circ \cdots \circ A^{(d)}\left(\alpha_{d-1}, :, \alpha_{d}\right)
\label{eq:TT_general}
\end{equation*}
which can be expressed element-wise as
\begin{equation}
\mathcal{A}(v_{1},v_{2},\ldots,v_{d})=A^{(1)}(v_{1})A^{(2)}(v_{2})\cdots A^{(d)}(v_{d}) .  
\end{equation}
Here, $A^{(k)} \in \mathbb{R}^{r_{k-1} \times n_k \times r_k}$ ($k=1,2,\dots,d$) 
are the core tensors, with the boundary condition $r_0 = r_d = 1$ to ensure proper contraction at the edges. The $v_k$th slice of $A^{(k)}$ is denoted by $A^{(k)}(v_k) \in \mathbb{R}^{r_{k-1} \times r_k}$.  
The sequence $r = (r_1, r_2, \dots, r_{d-1})$ is referred to as the \emph{TT-rank}, also called the \emph{bond dimensions} in the MPS terminology. 
\end{definition}

For any MPS, the maximum bond dimension is denoted by  $r_{\rm max}$ and the average bond dimension is denoted by $r_{\rm mean}$. The MPS with unit Frobenius norm is called unitary MPS.

\begin{definition}[$k$-unfolding \cite{liu2022tensor}]
The $k$-unfolding operator directly reshapes the original tensor into a matrix with the first $k$ modes as the row while the rest modes as the column. For a $d$th-order tensor $\mathcal{A} \in \mathbb{R}^{n_{1} \times \cdots \times n_{d}}$, it can be expressed as $\mathbf{A}_{\langle k \rangle}$ whose entries meet
\[
\mathbf{A}_{\langle k \rangle}(j_{1}, j_{2}) = \mathcal{A}(i_{1}, \ldots, i_{k}, \ldots, i_{d}),
\]
\text{where }
\[
j_{1} = \overline{i_{1},\ldots,i_{k}}, 
\qquad 
j_{2} = \overline{i_{k+1},\ldots,i_{d}}
\]
denote the linear indices corresponding to the multi-indices according to the big-endian (colexicographic) notation, i.e., the mappings are explicitly given by
\[
j_{1}
= i_{k} 
+ (i_{k-1}-1) n_{k}
+ \cdots 
+ (i_{1}-1) n_{2} n_{3} \cdots n_{k},
\]
\[
j_{2}
= i_{d} 
+ (i_{d-1}-1) n_{d}
+ \cdots
+ (i_{k+1}-1) n_{k+2} n_{k+3} \cdots n_{d}.
\]
\end{definition}

\begin{proposition}[Theorem 2.1 of \cite{oseledets2011tensor}]\label{property:tt}
For any tensor $\mathcal{A} \in \mathbb{R}^{n_1 \times n_2 \times \cdots \times n_d}$, there always exist TT-cores  $A^{(k)} \in \mathbb{R}^{r_{k-1} \times n_k \times r_k}$ ($k=1,2,\dots,d$) satisfying (\ref{eq:TT_general}) with TT-ranks
$$ r_k = \mathrm{rank}(\mathbf{A}_{\langle k \rangle}), \quad k = 1,2, \ldots, d-1.$$
\end{proposition}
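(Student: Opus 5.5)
We argue by induction on the order $d$, peeling off one mode at a time through successive low-rank factorizations of unfoldings; this is the constructive TT-SVD scheme of \cite{oseledets2011tensor}. For $d=1$ there is nothing to prove: $r_0=r_1=1$ and $A^{(1)}$ is $\mathcal{A}$ itself, reshaped to $1\times n_1\times 1$.

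For the inductive step, set $r_1=\mathrm{rank}(\mathbf{A}_{\langle 1\rangle})$ and take any rank factorization $\mathbf{A}_{\langle 1\rangle}=UV$ with $U\in\mathbb{R}^{n_1\times r_1}$ and $V\in\mathbb{R}^{r_1\times n_2\cdots n_d}$. A truncated SVD or QR with column pivoting would serve. We define $A^{(1)}(1,i_1,\alpha_1)=U(i_1,\alpha_1)$ and reshape $V$ into a tensor $\mathcal{B}$ of order $d-1$ by merging $\alpha_1$ with $i_2$. This gives a tensor of size $(r_1n_2)\times n_3\times\cdots\times n_d$. Then $\mathcal{A}(i_1,\ldots,i_d)=\sum_{\alpha_1}U(i_1,\alpha_1)\,\mathcal{B}(\overline{\alpha_1 i_2},i_3,\ldots,i_d)$.

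Next we apply the induction hypothesis to $\mathcal{B}$, whose first mode has size $r_1n_2$. It yields cores with ranks $\mathrm{rank}(\mathbf{B}_{\langle j\rangle})$ for $j=1,\ldots,d-2$. We then split the first core of $\mathcal{B}$, of size $1\times r_1n_2\times r_2'$, into $A^{(2)}\in\mathbb{R}^{r_1\times n_2\times r_2'}$ by unmerging the index $\overline{\alpha_1 i_2}$. Substituting back gives exactly the element-wise product form $A^{(1)}(i_1)A^{(2)}(i_2)\cdots A^{(d)}(i_d)$ of Definition~\ref{def:TT}.

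The key step, and the main obstacle, is to show that the ranks produced for $\mathcal{B}$ coincide with $\mathrm{rank}(\mathbf{A}_{\langle k\rangle})$ for $k\ge 2$. In matrix form, the identity above reads
\[
\mathbf{A}_{\langle k\rangle}=(I_{n_2\cdots n_k}\otimes U)\,\mathbf{B}_{\langle k-1\rangle}
\]
(up to the index ordering fixed by the colexicographic convention). Here $U$ has full column rank $r_1$, so $I\otimes U$ has full column rank and admits a left inverse. It follows that $\mathbf{B}_{\langle k-1\rangle}=(I\otimes U^{+})\mathbf{A}_{\langle k\rangle}$, and so $\mathrm{rank}(\mathbf{B}_{\langle k-1\rangle})=\mathrm{rank}(\mathbf{A}_{\langle k\rangle})$. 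This is where the choice of a \emph{full-rank} factorization at every step matters.

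Verifying the Kronecker/reshaping identity requires careful index bookkeeping, which I expect to be the most error-prone part. I would check it entrywise rather than via abstract Kronecker identities. The last core is handled by the base case, which gives $r_d=1$.
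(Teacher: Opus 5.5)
Your proof is correct and is essentially the constructive argument behind Theorem 2.1 of \cite{oseledets2011tensor}, which the paper cites without giving a proof of its own. That argument takes a full-rank (skeleton) factorization $\mathbf{A}_{\langle 1\rangle}=UV$, reshapes $V$ into an order-$(d-1)$ tensor with merged index $\overline{\alpha_1 i_2}$, and controls the remaining unfoldings through a left inverse of $U$ (Oseledets contracts $\mathcal{A}$ with $W=U(U^\top U)^{-1}$, whose transpose is your $U^{+}$). The only difference is that your two-sided relation between $\mathbf{A}_{\langle k\rangle}$ and $\mathbf{B}_{\langle k-1\rangle}$ gives equality of ranks directly, whereas Oseledets states only that the TT-ranks are ``not higher than'' $\mathrm{rank}(\mathbf{A}_{\langle k\rangle})$, with equality then following from the trivial bound $\mathrm{rank}(\mathbf{A}_{\langle k\rangle})\le r_k$ valid for any TT representation.
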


\begin{definition}[Left- and right-canonical]\label{def:lr-orthogonal}
For $k=1,\ldots, d$, the core tensor $A^{(k)} \in \mathbb{R}^{r_{k-1} \times n_k \times r_k}$ is called left-canonical if
\begin{equation}
\sum_{v_k=1}^{n_k} \big(A^{(k)}(v_k)\big)^\top A^{(k)}(v_k)
= I_{r_k},
\label{eq:left_canonical}
\end{equation}
and right-canonical if
\begin{equation}
\sum_{v_k=1}^{n_k} A^{(k)}(v_k)\big(A^{(k)}(v_k)\big)^\top
= I_{r_{k-1}} .
\label{eq:right_canonical}
\end{equation}
\end{definition}

\begin{proposition}[Mixed-canonical form of MPS \cite{oseledets2011tensor}]\label{property:MC-MPS}
For any MPS $\{A^{(1)},\dots,A^{(d)}\}$, one could select an index $k \in\{1,2,\ldots,d\}$ as the
\emph{orthogonality center} and transform the cores such that
\begin{align*}
&A^{(1)},\dots,A^{(k-1)} \text{ are left-canonical and} \nonumber\\
&A^{(k+1)},\dots,A^{(d)} \text{ are right-canonical.}
\label{eq:mixed_structure}
\end{align*}
Under this mixed-canonical structure, the contraction of the entire MPS satisfies
\begin{equation}
\|\mathcal{A}\|_F = \big\|A^{(k)}\big\|_F ,
\end{equation}
meaning that the Frobenius norm of the full tensor is completely concentrated at
the orthogonality center. 

Moreover, by contracting two adjacent cores $A^{(k)}$ and $A^{(k+1)}$ into a
fourth-order tensor 
\begin{equation}
\begin{split}
&A^{(k,k+1)}(\alpha_{k-1}, v_k, v_{k+1}, \alpha_{k+1}) = \\
&\quad \sum_{\alpha_k =1}^{r_k}A^{(k)}(\alpha_{k-1}, v_k, \alpha_{k})A^{(k+1)}(\alpha_k, v_{k+1}, \alpha_{k+1}) 
\end{split}
\label{eq:contracting}
\end{equation}
while keeping $A^{(1)},\dots,A^{(k-1)}$ left-canonical and
$A^{(k+2)},\dots,A^{(d)}$ right-canonical, the MPS admits a
two-site mixed-canonical form.
In this case, the Frobenius norm of the full tensor is concentrated on the merged
core, namely,
\begin{equation}
\|\mathcal{A}\|_F = \big\|A^{(k,k+1)}\big\|_F .
\end{equation}
\end{proposition}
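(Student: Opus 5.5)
Two facts need proof: that a mixed-canonical form with prescribed center $k$ always exists, and that the Frobenius norm then sits entirely on the center (single-site or merged two-site). For existence I would give a constructive sweep using QR factorizations. Going from left to right for $j=1,\dots,k-1$, I reshape $A^{(j)}$ into its left unfolding $\mathbf{L}_j\in\mathbb{R}^{r_{j-1}n_j\times r_j}$ (rows indexed by $(\alpha_{j-1},v_j)$) and factor $\mathbf{L}_j=Q_jR_j$ with $Q_j^\top Q_j=I$. I then replace $A^{(j)}$ by the reshaped $Q_j$ and absorb $R_j$ into the next core, $A^{(j+1)}(v_{j+1})\leftarrow R_jA^{(j+1)}(v_{j+1})$. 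Every product $A^{(1)}(v_1)\cdots A^{(d)}(v_d)$ is unchanged, so $\mathcal{A}$ is preserved. Moreover, $Q_j^\top Q_j=I$ is exactly condition (\ref{eq:left_canonical}) written slice-wise. The same procedure is run from the right for $j=d,\dots,k+1$, using LQ factorizations of the right unfoldings $\mathbf{R}_j\in\mathbb{R}^{r_{j-1}\times n_jr_j}$ and absorbing the triangular factor into $A^{(j-1)}$; this yields (\ref{eq:right_canonical}). The bond dimensions may shrink if I use reduced QR, but that is harmless.

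For the norm identity, I would define the left environment $\mathcal{L}_{j}=\sum_{v_1,\dots,v_j}(A^{(1)}(v_1)\cdots A^{(j)}(v_j))^\top(A^{(1)}(v_1)\cdots A^{(j)}(v_j))\in\mathbb{R}^{r_j\times r_j}$. I then show by induction on $j\le k-1$ that $\mathcal{L}_j=I_{r_j}$. The induction step is the identity $\mathcal{L}_j=\sum_{v_j}A^{(j)}(v_j)^\top\mathcal{L}_{j-1}A^{(j)}(v_j)$ together with left-canonicity, and the base case uses $r_0=1$. Symmetrically, the right environment $\mathcal{R}_j=\sum_{v_j,\dots,v_d}(A^{(j)}(v_j)\cdots A^{(d)}(v_d))(\cdots)^\top$ equals $I_{r_{j-1}}$ for $j\ge k+1$. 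Writing $\|\mathcal{A}\|_F^2=\sum_{v}\mathcal{A}(v)^2=\sum_v\operatorname{tr}\big(\mathcal{A}(v)^\top\mathcal{A}(v)\big)$ and splitting each product as left part, center, right part, the sums over left and right indices collapse to the environments. This gives
\begin{equation*}
\|\mathcal{A}\|_F^2=\sum_{v_k}\operatorname{tr}\big(A^{(k)}(v_k)^\top\mathcal{L}_{k-1}A^{(k)}(v_k)\mathcal{R}_{k+1}\big)=\sum_{v_k}\|A^{(k)}(v_k)\|_F^2=\|A^{(k)}\|_F^2 .
\end{equation*}

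The two-site statement follows by the same computation with the merged core $A^{(k,k+1)}$, viewed as a core with slices $A^{(k,k+1)}(v_k,v_{k+1})=A^{(k)}(v_k)A^{(k+1)}(v_{k+1})$ by (\ref{eq:contracting}). Here $\mathcal{L}_{k-1}=I$ and $\mathcal{R}_{k+2}=I$ by the hypotheses. Alternatively, I can note that $\mathcal{A}$ is an isometric image of the center: the map $A^{(k)}\mapsto\mathcal{A}$ is $\mathbf{P}\,\mathrm{vec}(A^{(k)})$, where $\mathbf{P}$ is built from the left block $Q_{<k}$ (orthonormal columns) and the right block $Q_{>k}$ (orthonormal rows), so it preserves the norm.

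The main obstacle is bookkeeping rather than conceptual: keeping the index orderings consistent so that the slice-wise canonical conditions coincide with orthonormality of the unfoldings, and handling the boundary cases $k=1$ and $k=d$, where one of the environments is the scalar $1$ because $r_0=r_d=1$.
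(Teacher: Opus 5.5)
Your proof is correct. The QR/LQ orthogonalization sweeps establish existence, and the induction $\mathcal{L}_j=\sum_{v_j}A^{(j)}(v_j)^\top\mathcal{L}_{j-1}A^{(j)}(v_j)=I$, together with its right-hand analogue, collapses $\|\mathcal{A}\|_F^2$ onto the single-site or merged two-site center. The paper gives no proof of its own and only cites Oseledets (2011); your argument is essentially the standard QR-orthogonalization argument from that reference, so there is nothing further to compare.
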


\subsection{Preliminaries on Riemannian Optimization}
In this subsection, we present some basic concepts about Riemannian optimization which acts on a smooth manifold equipped with a Riemannian metric.

\begin{definition}[Tangent space]\label{def:tangent-space}
Let $\gamma : (-\epsilon, \epsilon) \rightarrow \mathcal{M}$ be a smooth curve 
on the manifold $\mathcal{M}$ with $\gamma(0) = x$. 
The tangent vector of the curve $\gamma$ at the point $x$ is the operator
$\dot{\gamma}(0)$, which acts on any smooth function 
$f \in C^\infty(\mathcal{M})$ as
\begin{equation}
\dot{\gamma}(0) f := \left. \frac{d}{dt} f(\gamma(t)) \right|_{t=0}.
\end{equation}

The collection of all tangent vectors at $x$ forms a vector space, called
the tangent space at $x$, denoted by $T_x \mathcal{M}$.    
\end{definition}

\begin{definition}[Riemannian gradient]\label{def:riemann_gradient}
Let $(\mathcal{M}, g)$ be a Riemannian manifold, where $g$ is a Riemannian metric 
that assigns a positive-definite inner product on the tangent space $T_x \mathcal{M}$ at 
each point $x \in \mathcal{M}$:
\[
g_x : T_x \mathcal{M} \times T_x \mathcal{M} \rightarrow \mathbb{R}.
\]
Let $f: \mathcal{M} \rightarrow \mathbb{R}$ be a smooth function. 
The Riemannian gradient of $f$ at $x \in \mathcal{M}$, denoted by 
$\mathrm{grad} f(x)$, is the unique vector in $T_x \mathcal{M}$ satisfying
\begin{equation}
g_x \big( \mathrm{grad} f(x), v \big) = Df(x)[v], 
\quad \forall v \in T_x \mathcal{M},
\label{eq:riemannian_gradient}
\end{equation}
where $Df(x)[v]$ denotes the directional derivative of $f$ at $x$ along the vector $v$.
\end{definition}

\begin{proposition}[Computation of Riemannian gradient \cite{absil2008optimization}]
Let $\mathcal{M}$ be the Riemannian manifold embedded in Euclidean space $\mathbb{R}^n$.
The Riemannian gradient of $f$ at $x \in \mathcal{M}$ can be computed by projecting the Euclidean gradient 
${\nabla} f(x)$ onto the tangent space $T_x \mathcal{M}$:
\begin{equation}
\mathrm{grad} f(x) = P_{T_x \mathcal{M}} \big( {\nabla} f(x) \big),
\end{equation}
where $P_{T_x \mathcal{M}}$ is the orthogonal projection operator onto $T_x \mathcal{M}$ under the given Riemannian metric. 
\end{proposition}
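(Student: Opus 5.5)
The plan is to take the manifold $\mathcal{M}$ as a smooth embedded submanifold of $\mathbb{R}^n$ and give each tangent space $T_x\mathcal{M} \subseteq \mathbb{R}^n$ the metric inherited from the ambient Euclidean inner product, i.e.\ $g_x(u,v)=\langle u,v\rangle$ as in Definition~\ref{def:frobenius}. This is the standard Riemannian submanifold structure, and it is the setting in which the Proposition should be read. Since $T_x\mathcal{M}$ is a finite-dimensional linear subspace of $\mathbb{R}^n$, the orthogonal projection $P_{T_x\mathcal{M}}$ onto it exists and is unique. This gives the decomposition $\nabla f(x) = P_{T_x\mathcal{M}}(\nabla f(x)) + w$ with $w \perp T_x\mathcal{M}$.

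The key steps, in order, are as follows.

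\textbf{Step 1.} Interpret $f$ on $\mathcal{M}$ as the restriction of a smooth function $\bar f$ defined on a neighborhood of $\mathcal{M}$ in $\mathbb{R}^n$, so that $\nabla f(x)$ means $\nabla \bar f(x)$.

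\textbf{Step 2.} For $v\in T_x\mathcal{M}$, pick a smooth curve $\gamma$ as in Definition~\ref{def:tangent-space} with $\gamma(0)=x$ and $\dot\gamma(0)=v$. The Euclidean chain rule then gives
\[
Df(x)[v]=\tfrac{d}{dt}\bar f(\gamma(t))\big|_{t=0}=\langle \nabla \bar f(x), v\rangle .
\]

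\textbf{Step 3.} Using the orthogonal decomposition, $\langle \nabla \bar f(x), v\rangle = \langle P_{T_x\mathcal{M}}\nabla \bar f(x), v\rangle$ for every $v\in T_x\mathcal{M}$, because $\langle w,v\rangle=0$. So $P_{T_x\mathcal{M}}\nabla \bar f(x)$ is an element of $T_x\mathcal{M}$ satisfying \eqref{eq:riemannian_gradient}.

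\textbf{Step 4.} For uniqueness, suppose two tangent vectors $\xi_1,\xi_2$ both satisfy \eqref{eq:riemannian_gradient}. Then $g_x(\xi_1-\xi_2,v)=0$ for all $v\in T_x\mathcal{M}$. Taking $v=\xi_1-\xi_2$ and using positive-definiteness gives $\xi_1=\xi_2$. Hence $\mathrm{grad} f(x)=P_{T_x\mathcal{M}}(\nabla f(x))$.

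The main obstacle is Step 2. Definition~\ref{def:tangent-space} defines tangent vectors abstractly as derivations, so I must first identify $T_x\mathcal{M}$ with the linear subspace $\{\gamma'(0)\}\subseteq\mathbb{R}^n$ of velocity vectors of curves in $\mathcal{M}$. I would handle this by the standard fact that, for an embedded submanifold, the derivation $\dot\gamma(0)$ acting on the restriction $\bar f|_{\mathcal{M}}$ equals $\langle\nabla\bar f(x),\gamma'(0)\rangle$. This fact also shows that the result does not depend on which smooth extension $\bar f$ is chosen. I would also remark that the statement's phrase ``under the given Riemannian metric'' must mean the induced metric. For a general metric, the formula would instead involve the $g_x$-orthogonal projection of a metric-weighted gradient.
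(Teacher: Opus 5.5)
Your argument is correct and is the standard proof for Riemannian submanifolds in the cited reference (Absil, Mahony, and Sepulchre), which the paper invokes without giving a proof of its own. In that argument the chain rule gives $Df(x)[v]=\langle\nabla f(x),v\rangle$, the normal component of $\nabla f(x)$ drops out against tangent vectors, and uniqueness of the Riemannian gradient finishes it. Your caveat is also right: ``the given Riemannian metric'' must be the one induced by the ambient inner product, or more generally $\nabla f$ must be taken with respect to that same ambient inner product. This is how the paper uses it on $\mathbb{R}^{m\times n}\times\mathrm{Sym}(n)$ with $\langle\cdot,\cdot\rangle_\omega$.
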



\begin{proposition}[Transversality \cite{LM08}]
Let $\mathcal{M}_1$ and $\mathcal{M}_2$ be smooth manifolds embedded in 
$\mathbb{R}^n$ and $\mathcal{M}_1 \cap \mathcal{M}_2 \neq \emptyset $. It follows that  $\mathcal{M}_1 \cap \mathcal{M}_2$ is still a smooth manifold if and only if $\mathcal{M}_1$ and $\mathcal{M}_2$ intersect 
transversely, i.e.,
\begin{equation}
T_x \mathcal{M}_1 + T_x \mathcal{M}_2 = \mathbb{R}^{n}
\label{eq:transverse}
\end{equation}
for any $x \in \mathcal{M}_1 \cap \mathcal{M}_2$.    
\end{proposition}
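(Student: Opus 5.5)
The plan is to prove the sufficiency direction (transversality implies a smooth intersection) with the regular value theorem, applied through local defining functions. I will then treat the converse separately, because it needs an extra hypothesis.

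\emph{Sufficiency.} Fix $x \in \mathcal{M}_1 \cap \mathcal{M}_2$ and let $k_i = n - \dim \mathcal{M}_i$ be the codimensions. Since each $\mathcal{M}_i$ is an embedded submanifold of $\mathbb{R}^n$, there is a neighborhood $U$ of $x$ and smooth maps $F_i : U \to \mathbb{R}^{k_i}$ such that
\[
\mathcal{M}_i \cap U = F_i^{-1}(0), \qquad DF_i(y) \text{ surjective for all } y \in U, \qquad \ker DF_i(y) = T_y\mathcal{M}_i \text{ for } y \in \mathcal{M}_i \cap U.
\]
Set $F = (F_1, F_2) : U \to \mathbb{R}^{k_1 + k_2}$, so that $\mathcal{M}_1 \cap \mathcal{M}_2 \cap U = F^{-1}(0)$.

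The key linear-algebra step is to show that $DF(y)$ is surjective at every $y$ in the intersection. For any $(a, b) \in \mathbb{R}^{k_1} \times \mathbb{R}^{k_2}$, surjectivity of $DF_1(y)$ and $DF_2(y)$ gives $u, w \in \mathbb{R}^n$ with $DF_1(y)u = a$ and $DF_2(y)w = b$. The transversality condition (\ref{eq:transverse}) lets us write $u - w = t_1 + t_2$ with $t_i \in T_y\mathcal{M}_i$. Put $z = u - t_1 = w + t_2$. Then $DF_1(y)z = a$, because $t_1 \in \ker DF_1(y)$, and $DF_2(y)z = b$, because $t_2 \in \ker DF_2(y)$.

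Hence $0$ is a regular value of $F$ on the intersection. Surjectivity is an open condition, so we may shrink $U$ to make $DF$ surjective throughout. The regular value theorem then shows that $\mathcal{M}_1 \cap \mathcal{M}_2 \cap U$ is an embedded submanifold of dimension $n - k_1 - k_2$, with tangent space
\[
\ker DF(y) = T_y\mathcal{M}_1 \cap T_y\mathcal{M}_2 .
\]
These local charts are compatible because embedded submanifold structure is unique, so they glue into a global smooth structure.

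\emph{Necessity.} This is the main obstacle, since it is false as literally stated. Take $\mathcal{M}_1 = \mathcal{M}_2$ to be a line in $\mathbb{R}^2$: the intersection is the line itself, a smooth manifold, yet $T_x\mathcal{M}_1 + T_x\mathcal{M}_2$ is one-dimensional. I would therefore prove the converse under the clean-intersection hypothesis that is implicitly intended, namely
\[
T_x(\mathcal{M}_1 \cap \mathcal{M}_2) = T_x\mathcal{M}_1 \cap T_x\mathcal{M}_2, \qquad \dim(\mathcal{M}_1 \cap \mathcal{M}_2) = \dim \mathcal{M}_1 + \dim \mathcal{M}_2 - n .
\]
Under this hypothesis the dimension formula for a sum of subspaces gives
\[
\dim\big(T_x\mathcal{M}_1 + T_x\mathcal{M}_2\big) = \dim T_x\mathcal{M}_1 + \dim T_x\mathcal{M}_2 - \dim\big(T_x\mathcal{M}_1 \cap T_x\mathcal{M}_2\big) = n,
\]
which is exactly (\ref{eq:transverse}).

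For the paper's purposes only sufficiency is used: it guarantees that the intersection of the unit sphere with the fixed-rank TT manifold is a manifold whose tangent space is the intersection of the two tangent spaces. I would state explicitly that the "only if" direction is used only in this restricted, clean-intersection sense.
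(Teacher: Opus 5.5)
The paper gives no proof of this proposition; it is quoted from Lewis--Malick, so there is no argument in the paper to compare yours against. Your sufficiency proof is the standard regular-value argument, and it is complete:
\begin{itemize}
\item slice charts give local submersions $F_i$ with $\ker DF_i(y)=T_y\mathcal{M}_i$;
\item the choice $z=u-t_1=w+t_2$ correctly shows that $DF(y)=(DF_1(y),DF_2(y))$ is onto at every intersection point;
\item the regular value theorem then gives both smoothness and $T_x(\mathcal{M}_1\cap\mathcal{M}_2)=T_x\mathcal{M}_1\cap T_x\mathcal{M}_2$.
\end{itemize}
No gluing step is needed, since being an embedded submanifold of $\mathbb{R}^n$ is a local property.

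Your objection to the ``only if'' direction is also correct. The biconditional as printed is false, and your example $\mathcal{M}_1=\mathcal{M}_2$ settles it. It fails even for distinct curves whose intersection has the expected dimension. For example, the $x$-axis and the parabola $y=x^2$ in $\mathbb{R}^2$ meet only at the origin, which is a $0$-dimensional manifold, yet their tangent lines there coincide. This is why your restricted converse needs the tangent-space hypothesis and not just the dimension count.

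That restricted converse is valid, but it is essentially a restatement of transversality. It is cleaner to say plainly that only the forward implication holds in general. That is also the only direction the paper uses: it invokes the transversality of $\mathcal{S}_{m\times n}$ and $\mathcal{M}_k$ proved in Appendix~A. One small correction: $\mathcal{M}_k$ there is the fixed-rank \emph{matrix} manifold, applied to the unfolding $A^{(k,k+1)}_{\langle 2\rangle}$, not a fixed-rank TT manifold.
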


\begin{definition}[Retraction]\label{def:retraction}
A retraction on a manifold $\mathcal{M}$ is a smooth mapping $R$ from the tangent bundle $T \mathcal{M}$ onto $\mathcal{M}$ with the following properties. Let $R_x$ denote the restriction of $R$ to $T_x \mathcal{M}$.  
\begin{itemize}
    \item[(i)] $R_x(0_x) = x$, where $0_x$ denotes the zero element of $T_x \mathcal{M}$. 
    \item[(ii)] $R_x$ satisfies  $DR_x(0_x) = id_{T_x \mathcal{M}}$, where $id_{T_x \mathcal{M}}$ denotes the identity mapping on $T_x \mathcal{M}$.
\end{itemize}
\end{definition}

In fact, the exponential mapping is the ideal way to map a tangent vector to a point on the manifold by traveling along the geodesic with initial velocity given by the tangent vector \cite{absil2008optimization}.
It’s geometrically canonical, but often too expensive or inconvenient to compute. In practice, a retraction is used instead because it gives a cheap, well-behaved approximation of the exponential mapping that is sufficient for convergence guarantees.
In Riemannian gradient-based optimization, the update rule at iteration $k$ 
is given by
\begin{equation}
x_{k+1} = R_{x_k} \Big( - \theta \, \mathrm{grad} f(x_k) \Big),
\label{eq:retraction_update}
\end{equation}
where $R_x(\xi)$ is a retraction mapping that maps a tangent vector 
$\xi \in T_x \mathcal{M}$ back to a point on the manifold $\mathcal{M}$, 
and $\theta > 0$ is the learning rate.

\subsection{Relevant Manifolds}
In this subsection, we introduce two  smooth manifolds embedded in 
$\mathbb{R}^{m \times n}$: unit sphere manifold $\mathcal{S}_{m\times n}$ and fixed-rank manifold $\mathcal{M}_k$. Their definitions are given below, along with some basic properties used in the subsequent analysis.

\subsubsection{Unit Sphere Manifold}
Define the manifold
\begin{equation}
\mathcal{S}_{m\times n} = \{ X \in \mathbb{R}^{m \times n} \mid \| X \|_F = 1 \},
\label{unit_sphere_manifold}
\end{equation}
which is the unit sphere endowed with the Frobenius norm. 
It is a smooth manifold of dimension $mn - 1$.

For any $X \in \mathcal{S}_{m\times n}$, the tangent space $T_X \mathcal{S}_{m\times n}$ consists 
of all matrices that are orthogonal to $X$ with respect to the Frobenius inner product:
\begin{equation}
\begin{split}
T_X \mathcal{S}_{m\times n} = \{ \dot{X} \in \mathbb{R}^{m \times n} \mid 
\langle X, \dot{X} \rangle = \mathrm{tr}(X^\top \dot{X}) = 0 \}.
\end{split}
\label{eq:unit_sphere_tangent}
\end{equation}


\subsubsection{Fixed-Rank Manifold}
Define the manifold
\begin{equation}
\mathcal{M}_k = \{ X \in \mathbb{R}^{m \times n} \mid \mathrm{rank}(X) = k \},
\label{fix_rank_manifold}
\end{equation}
which is the fixed-rank-$k$ manifold. It is a smooth manifold of dimension 
$k(m+n-k)$.

For any $X \in \mathcal{M}_k$, let $X$ have a rank decomposition 
$X = U V^\top$ with $U \in \mathbb{R}^{m \times k}$ and $V \in \mathbb{R}^{n \times k}$. 
Then the tangent space at $X$ is given by
\begin{equation}
T_X \mathcal{M}_k = \{ \dot{U} V^\top + U \dot{V}^\top \mid 
\dot{U} \in \mathbb{R}^{m \times k}, \ \dot{V} \in \mathbb{R}^{n \times k} \}.
\label{eq:fixed_rank_tangent}
\end{equation}
It is worth noting that when $k \leq r$, the fixed-rank manifold $\mathcal{M}_k$ is an embedded smooth submanifold of the
low-rank set
\begin{equation}
\mathcal{M}_{\le r} = \{ X \in \mathbb{R}^{m \times n} \mid \mathrm{rank}(X) \le r \},
\label{eq:low_rank}
\end{equation}
which, in contrast, is not a smooth manifold everywhere due to the presence of singular points at matrices 
with rank strictly smaller than $r$. The manifold $\mathcal{M}_k$ can therefore be viewed as the smooth 
stratum of $\mathcal{M}_{\le r}$ corresponding to matrices of maximal rank $r$ within this set.

In Riemannian optimization, intersections of multiple manifolds frequently arise. In practice, the tangent space of such an intersection can be approximated via iterative projections, after which standard Riemannian optimization procedures can be applied to search for an optimal solution. However, in tensor optimization problems, the optimal tensor rank is often unknown. From a theoretical perspective, identifying the global optimum would require an exhaustive search over all possible ranks, which is computationally infeasible. Furthermore, the set of low-rank tensors does not constitute a smooth manifold, and consequently, its intersection with other manifolds is not guaranteed to possess smoothness properties.

\section{A space-decoupling Method for optimization on unitary MPS}
In this section, 
we first review the standard MPS generative model and then introduce the proposed generative model based on unitary MPS (UMPS), which incorporates a normalization constraint to enhance training stability and probabilistic interpretability. The UMPS model naturally leads to optimization problems defined over structured tensor manifolds, where the objective is to maximize the likelihood or other relevant criteria. For the UMPS model, we formulate the corresponding optimization problem and employ a space-decoupling strategy, which enables individual core tensors to be optimized independently while adhering to manifold constraints. This approach improves computational efficiency and supports parallel updates, making it particularly suitable for high-dimensional generative tasks.

\subsection{MPS for generative model}
With the objective of unsupervised learning, Han \textit{et al.}~\cite{han2018unsupervised} proposed a generative model based on the Matrix Product State representation (referred to as the MPS model), which is inspired by the probabilistic interpretation of wavefunctions in quantum physics.
Consider a dataset $\mathcal{T}$ consisting of binary strings 
$\mathbf{v}=(v_1,v_2,\cdots,v_d) \in \mathcal{V} = \{0,1\}^{\otimes d}$, 
which may contain duplicates and can be mapped to the basis vectors of a Hilbert space with dimension $2^d$. Later, we will regard $\mathcal{T}$ as the training set.
Let $|\mathcal{T}|$ represent the number of elements in set $\mathcal{T}$. Formally, the empirical distribution $\hat{P}$ obtained during training is given by:
\begin{equation}
\hat{P}(\mathbf{v}) = \frac{\text{frequency of } \mathbf{v} }{|\mathcal{T}|}, \quad \text{for } \mathbf{v} \in \mathcal{T}.    
\end{equation}
The MPS parameterizes the wavefunction as follows:
\begin{equation*}
\begin{split}
&\Psi(\mathbf{v};A^{(1)}, \cdots, A^{(d)}) 
= \mathrm{Tr}\!\left(
A^{(1)}(v_1)
\cdots
A^{(d)}(v_d)
\right) \\
&= \sum_{\alpha_{0}=1}^{r_0} \cdots \sum_{\alpha_{d}=1}^{r_d} 
A^{(1)}\left(\alpha_{0}, v_1, \alpha_{1}\right)  
\cdots  
A^{(d)}\left(\alpha_{d-1}, v_d, \alpha_{d}\right),
\end{split}
\label{eq:mps_wavefunction}
\end{equation*}
where $A^{(k)} \in \mathbb{R}^{r_{k-1} \times 2 \times r_{k}}$, $k=1,2,\ldots, d$.
When the optimization of parameters in the MPS is not considered, we simply use 
$\Psi(\mathbf{v})$ to represent $\Psi(\mathbf{v};A^{(1)}, \cdots, A^{(d)})$.

The model represents the probability distribution of data as the squared norm of the MPS wavefunction:
\begin{equation}
P(\mathbf{v}) = \frac{|\Psi(\mathbf{v})|^2}{Z},
\label{eq:mps_prob}
\end{equation}
where $Z = \sum_{\mathbf{v} \in \mathcal{V}} |\Psi(\mathbf{v})|^2$
denotes the normalization factor ensuring that the total probability sums to one.
Once the MPS form of the wavefunction is established, the model learns the probability distribution by optimizing the parameters of $\Psi(\mathbf{v};A^{(1)}, \cdots, A^{(d)})$ 
so that the resulting distribution approximates the empirical data distribution $\hat{P}(\mathbf{v})$.
A common approach is to minimize the \textbf{Negative Log-Likelihood} (NLL), 
which is equivalent to minimizing the Kullback–Leibler divergence between the empirical distribution $\hat{P}(\mathbf{v})$ and the model distributions $P(\mathbf{v})$.
Thus, training an MPS generative model based on the NLL criterion is equivalent to solving the following optimization problem:
\begin{equation*}
\begin{aligned}
\min_{A^{(1)}, \cdots, A^{(d)}}
& \mathcal{L}
= -\frac{1}{|\mathcal{T}|}
\sum_{\mathbf{v} \in \mathcal{T}}
\ln \frac{|\Psi(\mathbf{v};A^{(1)}, \cdots, A^{(d)})|^2}{\sum_{\mathbf{v} \in \mathcal{V}} |\Psi(\mathbf{v};A^{(1)}, \cdots, A^{(d)})|^2}.
\end{aligned}
\label{eq:mps_loss}
\end{equation*}

In two-site DMRG \cite{stoudenmire2012studying, white1992density}, the MPS parameters are updated by optimizing two neighboring tensors simultaneously, followed by a truncation based on the reduced density matrix to retain the most relevant states. Specifically, when updating the $k$th and $(k+1)$th core tensors, we fix the remaining core tensors. By contracting two adjacent tensors $A^{(k)}$ and $A^{(k+1)}$, we obtain a fourth-order tensor denoted by $A^{(k,k+1)} \in \mathbb{R}^{r_{k-1} \times 2 \times 2 \times r_{k+1}}$ satisfying Eq. (\ref{eq:contracting}).
It follows that
\begin{equation*}
\begin{split}
&\Psi(\mathbf{v};A^{(1)}, \cdots, A^{(d)})\\[6pt]
&=
\sum_{\alpha_{i}(i \neq k)}
A^{(1)}(\alpha_{0}, v_1, \alpha_{1}) \cdots 
A^{(k,k+1)}(\alpha_{k-1}, v_k, v_{k+1}, \alpha_{k+1}) \\[2pt]
&\quad\ 
\cdots\,
A^{(d)}(\alpha_{d-1}, v_d, \alpha_{d}) \\[6pt]
&=
\Psi(\mathbf{v};A^{(1)}, \cdots, A^{(k,k+1)}, \cdots, A^{(d)})
:= \Psi(\mathbf{v};A^{(k,k+1)}).
\end{split}
\end{equation*}

Then, we compute the derivative of the negative log-likelihood with respect to $A^{(k,k+1)}$:
\begin{equation}
\frac{\partial \mathcal{L}}
{\partial A^{(k,k+1)}}
= \frac{Z'}{Z}
- \frac{2}{|\mathcal{T}|}
\sum_{\mathbf{v} \in \mathcal{T}}
\frac{\Psi'(\mathbf{v};A^{(k,k+1)})}{\Psi(\mathbf{v};A^{(k,k+1)})},
\label{eq:mps_gradient}
\end{equation}
where $\Psi'(\mathbf{v};A^{(k,k+1)})$ denotes the derivative of the MPS with  respect to the tensor element of $A^{(k,k+1)}$ and $Z' = 2 \sum_{\mathbf{v} \in \mathcal{V}} \Psi'(\mathbf{v};A^{(k,k+1)})\Psi(\mathbf{v};A^{(k,k+1)})$. 

Since the optimization problem arising from the above model is unconstrained, a natural approach is to employ gradient-based methods to search for an optimal solution. However, for the model distribution, uniformly scaling all values $\Psi({\mathbf{v}})$ associated with each element $\mathbf{v}$ by a common factor yields an equivalent probability distribution, as only the overall normalization is affected. This scale non-uniqueness can impede gradient-based optimization, causing iterates to oscillate among multiple equivalent optima or converge slowly due to the presence of flat directions in the objective landscape.

A natural way to mitigate this issue is to enforce the constraint \(Z = 1\) throughout the update process. This approach, often referred to as a gradient projection method, ensures that each iterate remains properly normalized and prevents the optimization from drifting along the redundant scaling direction. Simultaneously, the gradient expression further simplifies to:
\begin{equation}
\frac{\partial \mathcal{L}}
{\partial A^{(k,k+1)}}
= Z' 
- \frac{2}{|\mathcal{T}|}
\sum_{\mathbf{v} \in \mathcal{T}}
\frac{\Psi'(\mathbf{v};A^{(k,k+1)})}{\Psi(\mathbf{v};A^{(k,k+1)})}.
\label{eq:mps_gradient_simplified}
\end{equation}

Although the above strategy alleviates the issue to some extent, the projection step 
inevitably introduces a loss in the effective update progress, thereby reducing the 
overall efficiency of the optimization.
To eliminate this ambiguity, we augment the optimization problem by using unitary MPS. 
Fixing the partition function in this way removes the redundant degree of freedom 
and forces the optimization to converge toward a specific representative of the 
equivalence class of optimal solutions. As a result, the optimization becomes more 
stable and exhibits improved convergence speed and accuracy.

\subsection{Unitary MPS for generative model}
In this part, we introduce the proposed  generative model based on unitary Matrix Product State (referred to as the UMPS model), which is derived from the conventional MPS model by incorporating a normalization constraint $Z=1$. The motivation behind this modification is to eliminate redundant equivalence classes that naturally arise during the parameter update process. By constraining the MPS cores to a normalized form, the UMPS model effectively reduces the search space and accelerates convergence to the optimal solution. 

Specifically, we impose a normalization constraint $Z = 1$ 
to maintain the consistency of the probability distribution $P(\mathbf{v})$ during optimization, 
resulting in the following constrained optimization problem:
\begin{equation}
\begin{aligned}
\min_{A^{(1)}, \cdots, A^{(d)}} \ \ & 
\mathcal{L}
= -\frac{1}{|\mathcal{T}|}
\sum_{\mathbf{v} \in \mathcal{T}}
\ln |\Psi(\mathbf{v};A^{(1)}, \cdots, A^{(d)})|^2 \\
\text{subject to} \quad &
\sum_{\mathbf{v} \in \mathcal{V}} |\Psi(\mathbf{v};A^{(1)}, \cdots, A^{(d)})|^2 = 1.
\end{aligned}
\tag{P1}
\label{eq:nmps_constraint}
\end{equation}
Inspired by the update strategy of MPS model, (\ref{eq:nmps_constraint}) is optimized using a two-site alternating scheme. We define one $loop$ as sweeping from $k=1$ to $d-1$ and then returning to $k=1$. 
To be specific, each neighboring tensor $A^{(k,k+1)}$ is updated in the order
\[ A^{(1,2)}\rightarrow A^{(2,3)} \rightarrow \cdots \rightarrow A^{(d-1,d)}  \rightarrow \cdots \rightarrow A^{(1,2)} \rightarrow \cdots \]
until convergence.
According to Proposition~\ref{property:MC-MPS}, if the MPS is in the mixed-canonical form, we have $Z=\|A^{(k,k+1)}\|_F^2$ and $Z' = 2 A^{(k,k+1)}$ which simplify the computation of $Z$ and the Euclidean gradient given by (\ref{eq:mps_gradient_simplified}). 
Therefore, maintaining the mixed-canonical form of the MPS throughout the optimization is essential for computational simplicity. 

Specifically, we initialize the MPS such that $A^{(2)}, \ldots, A^{(d)}$ are right-canonical and $\|A^{(1)}\|_F = 1$. 
At this time, $Z = \|A^{(1,2)}\|_F^{2} = \|A^{(1)}\|_F^{2} = 1$.
Without loss of generality, after updating $A^{(k,k+1)}$, we perform the singular value decomposition (SVD) on the updated $A_{\langle 2 \rangle}^{(k,k+1)}$, i.e., $\mathrm{SVD}(A_{\langle 2 \rangle}^{(k,k+1)}) = U_k \Sigma_k V_k^\top$. If the next step updates 
$A^{(k+1,k+2)}$, we set $A^{(k)}_{\langle 2 \rangle} = U_k$ and $A^{(k+1)}_{\langle 1 \rangle} = \Sigma_k V_k^\top$. This makes $A^{(1)}, \ldots, A^{(k)}$ left-canonical and $A^{(k+2)}, \ldots, A^{(d)}$ right-canonical. If the next step updates $A^{(k-1,k)}$, we set $A^{(k)}_{\langle 2 \rangle} = U_k \Sigma_k$ and $A^{(k+1)}_{\langle 1 \rangle} = V_k^\top$ to maintain the mixed-canonical condition. This procedure guaranties that the condition $Z = \|A_{\langle 2 \rangle}^{(k,k+1)}\|_F^2$ is maintained throughout the entire update process. In order to meet the constraints of (\ref{eq:nmps_constraint}), the condition $\|A_{\langle 2 \rangle}^{(k,k+1)}\|_F^2 = 1$ is enforced at this step of the algorithm.
If we continue to employ the previously used gradient descent method to solve the problem, a truncated $\mathrm{SVD}$ is applied to 
\( A_{\langle 2 \rangle}^{(k,k+1)} \) in order to ensure that the maximum bond dimension $r_{max}$
does not exceed the prescribed threshold.
However, the truncation process breaks the manifold constraint in ~(\ref{eq:nmps_constraint}), 
which requires us to simultaneously preserve the unit-norm property 
of $A_{\langle 2 \rangle}^{(k,k+1)}$ 
and enforce its low-rank structure in the subproblem.

In practice, to simplify the model, we may constrain the tensor to lie 
on a fixed-rank manifold during optimization. 
However, this introduces substantial redundancy, 
and in many cases, only an upper bound of the bond dimension $r_{\rm max}$ is known while the optimal fixed rank remains indeterminate. 
That is, we need to search for a better tensor 
$A_{\langle 2 \rangle}^{(k,k+1)}$ within the intersection of the 
low-rank set $\mathcal{M}_{\le r_{\rm max}}$ 
and the unit sphere manifold $\mathcal{S}_{m \times n}$. 
This leads to the following subproblem of (\ref{eq:nmps_constraint}):
\begin{equation}
\begin{aligned}
\min_{A_{\langle 2 \rangle}^{(k,k+1)}} \quad
& \mathcal{L} 
= -\frac{1}{|\mathcal{T}|} 
\sum_{\mathbf{v} \in \mathcal{T}} 
\ln | \Psi\big(\mathbf{v}; A_{\langle 2 \rangle}^{(k,k+1)}\big) |^2 \\
\text{subject to} \quad
& \|A_{\langle 2 \rangle}^{(k,k+1)}\|_F^2 = 1, \\
& \mathrm{rank}\!\big(A_{\langle 2 \rangle}^{(k,k+1)}\big) \le r_{\rm max}.
\end{aligned}
\tag{P2}
\label{eq:nmps_subproblem_compact}
\end{equation}

Evidently, the above formulation is essentially an optimization problem defined over the intersection of the unit sphere and a low-rank matrix set. 
Due to the constraints, the problem becomes not only nonlinear but also strongly nonconvex, making direct optimization extremely challenging.
A straightforward approach is to enumerate all possible rank values and determine the optimal rank selection. According to Appendix \ref{transverse}, the unit sphere manifold and the fixed-rank manifold intersect transversely. Therefore, once the optimal rank is identified, the optimal solution can be obtained by projection onto the unit-sphere manifold. However, this enumeration strategy incurs exponential computational cost, rendering it impractical for applications.

In fact, constrained problems on non-smooth manifolds are not without direct solution strategies. In the next subsection, we design a spatially decoupled optimization method tailored for (\ref{eq:nmps_subproblem_compact}), which separates the intertwined low-rank constraint and unit-sphere constraint into two independent spaces. This decoupling allows the optimization to be carried out on smooth manifolds, where efficient and well-defined Riemannian optimization procedures become feasible.

\subsection{Space-decoupling method for UMPS}
The feasible set of (\ref{eq:nmps_subproblem_compact}) is given by the intersection of a low-rank matrix set and a smooth manifold, which results in a domain that is highly nonconvex and nonsmooth. Consequently, traditional optimization methods encounter substantial difficulties, including the absence of tractable projection operators, unclear optimality conditions, and potential instability caused by uncontrolled oscillations along scale-dependent directions. 

To systematically address these challenges, we adopt the \emph{space-decoupling} framework proposed by Yang \emph{et al.} \cite{yang2025space}, which provides an efficient computational paradigm tailored for low-rank optimization problems endowed with orthogonally invariant constraints. 
The central idea of this framework is to decouple variables and localize constraints so that the original manifold-constrained optimization problem can be transformed into a sequence of simpler and lower-dimensional subproblems, thereby significantly reducing computational complexity.

Before applying the framework to (\ref{eq:nmps_subproblem_compact}), we first consider the following general optimization model with a unit Frobenius-norm constraint and a low-rank constraint:
\begin{equation}
\begin{aligned}
\min_{X \in \mathbb{R}^{m \times n}} \quad &
 f(X), \\
\text{subject to} \quad &
\mathrm{rank}(X) \le r, \\
&\|X\|_F^2 = 1. 
\end{aligned}
\label{eq:general_lowrank_unitF}
\tag{P3}
\end{equation}
It can be seen that this is an optimization problem constrained on $\mathcal{M}_{\leq r} \cap \mathcal{S}_{m\times n}$ which is a nonsmooth manifold.

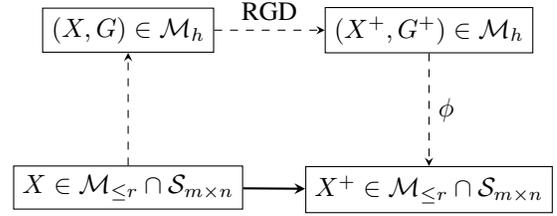
\begin{figure}[t]
\centering
\begin{tikzpicture}[>=stealth, node distance=4cm]
\node (a) [draw, rectangle] {$(X,G) \in \mathcal{M}_{h}$};
\node (b) [draw, rectangle, right of=a] {$(X^{+},G^{+}) \in \mathcal{M}_{h}$};
\node (c) [draw, rectangle, below=1.5cm of a] {$X \in \mathcal{M}_{\leq r} \cap \mathcal{S}_{m\times n}$};
\node (d) [draw, rectangle, below=1.5cm of b] {$X^{+} \in \mathcal{M}_{\leq r} \cap \mathcal{S}_{m\times n}$};
\draw[dashed, ->] (a) -- node[above, midway] {RGD} (b);
\draw[dashed, ->] (c) -- (a);
\draw[dashed, ->] (b) -- node[right, midway] {$\phi$} (d);
\draw[->, thick] (c) -- (d);
\end{tikzpicture}
\caption{Optimization flowchart of the space-decoupling method, where dashed lines indicate the optimization process.}
\label{Space Decoupling Method}
\end{figure}

To apply the space–decoupling framework to the above optimization model, the key idea is to parameterize the feasible set so that all constraints are automatically preserved throughout the optimization procedure. The flowchart is shown in Fig.~\ref{Space Decoupling Method}. Specifically, we decouple the intertwined constraints into two separate spaces, thereby enabling the application of Riemannian gradient descent (RGD) for optimization on the smooth manifold. For this purpose, we introduce the following space-decoupling parameterization for the constraints $\mathcal{M}_{\leq r} \cap \mathcal{S}_{m\times n}$:
\begin{equation*}
\mathcal{M}_{h}
=
\left\{
(X,G)\in \mathbb{R}^{m\times n} \times \mathrm{Gr}(n,n-r)
\;\big|\;
\begin{aligned}
& XG = 0, \\
& \|X\|_F = 1
\end{aligned}
\right\},
\label{eq:Mh_def}
\end{equation*}
where 
\[
\mathrm{Gr}(n,n-r)
=
\left\{
G \in \mathrm{Sym}(n)
:\;
G^2 = G,\;
\mathrm{rank}(G) = n-r
\right\}
\]
and $\mathrm{Sym}(n)$ denotes the space of real symmetric $n\times n$ matrices.
As shown in \cite{yang2025space}, $\mathcal{M}_{h}$ is a smooth manifold.

With the manifold $\mathcal{M}_h$ defined above, we circumvent the nonsmooth constraints inherent in the original problem (\ref{eq:general_lowrank_unitF}) 
by exploiting the smooth parameterization $\mathcal{M}_h$ with the mapping $\phi: \mathbb{R}^{m \times n}\times \mathrm{Sym}(n) \rightarrow \mathbb{R}^{m \times n}: (X,G) \mapsto X$ satisfying $\phi(\mathcal{M}_h) = \mathcal{M}_{\leq r} \cap \mathcal{S}_{m\times n}$. 
This enables us to lift the problem to a smooth Riemannian optimization problem on the abstract manifold $\mathcal{M}_h$:
\begin{equation}
\min_{(X,G) \in \mathcal{M}_h} \bar{f}(X,G) 
:= f\bigl(\phi(X,G)\bigr).
\tag{$\text{P-}\mathcal{M}_h$}
\label{eq:equal_problem_p3}
\end{equation}
Crucially, (\ref{eq:general_lowrank_unitF}) and (\ref{eq:equal_problem_p3}) share the same optimal value, 
and their optimal solutions are in one-to-one correspondence via the parameterization $\phi$. 
This smooth reformulation allows us to directly apply the rich toolbox of Riemannian optimization algorithms 
and their associated theoretical guarantees to solve the originally nonsmooth problem.

In what follows, we describe the tangent space $T_{(X,G)}\mathcal{M}_{h}$ and define the Riemannian metric on $\mathcal{M}_h$ which are needed for the Riemannain gradient descent update. Note that for any $(X,G)\in\mathcal{M}_{h}$, there always exists a \emph{representation} $(H, V)\in \mathbb{R}^{m \times r} \times \mathrm{St}(n,r)$ such that
\begin{equation}
X = H V^\top,
\qquad
G = I - V V^\top,
\label{eq:XG_factorization}
\end{equation}
where $\mathrm{St}(n,r)$ denotes the set of $n\times r$ matrices with orthonormal columns and $H \in \mathbb{R}^{m\times r}$ satisfies $\|H\|_F = 1$.
Under the space-decoupling framework, the tangent space of \((X,G)\in \mathcal{M}_{h}\) is given by
\begin{equation}
\label{eq:Mh_tangent}
\begin{aligned}
T_{(X,G)}\mathcal{M}_{h} & 
= \Big\{ ( K V^{\top} + H V_{p}^{\top},  - V_{p} V^{\top} - V V_{p}^{\top}) \mid \\
 & \ K \in T_{H}\mathcal{S}_{m \times r},\  V_p \in \mathbb{R}^{n \times r}, \ V^{\top} V_{p} = 0 \,\Big\}.
\end{aligned}
\end{equation}
Here, \(\mathcal{S}_{m \times r}=\{H \in \mathbb{R}^{m\times r} \mid \|H\|_{F}=1\}\) denotes the unit sphere manifold embedded in $\mathbb{R}^{m \times r}$ and the tangent space $T_{H}\mathcal{S}_{m \times r}$ is given by Eq. \eqref{eq:unit_sphere_tangent}. For any $(\eta,\zeta) \in T_{(X,G)}\mathcal{M}_{h}$, the corresponding $(K, V_p) \in \mathbb{R}^{m \times r} \times \mathbb{R}^{n \times r}$ in Eq. \eqref{eq:Mh_tangent} is also called a \emph{representation} of $(\eta,\zeta)$.

The manifold $\mathcal{M}_{h}$ is treated as an embedded submanifold of the Euclidean ambient space 
$\mathbb{R}^{m \times n} \times \mathrm{Sym}(n)$ equipped with the weighted inner product
\[
\langle (E_1,Z_1),(E_2,Z_2) \rangle_\omega 
= \langle E_1,E_2 \rangle + \omega \langle Z_1,Z_2 \rangle,
\]
where \(\omega > 0\) is the weighting
parameter.
The induced Riemannian metric on $\mathcal{M}_h$ is obtained by restricting this inner product to the tangent space $T_{(X,G)}\mathcal{M}_h$ at each point $(X,G) \in \mathcal{M}_h$. Taking into account the representations---$(H,V), (K_1, V_{p,1})$ and $(K_2, V_{p,2})$---for $(X,G)$ and $(\eta_1,\zeta_1)$, $(\eta_2,\zeta_2) \in T_{(X,G)}\mathcal{M}_{h}$, the metric takes the form
\[
\langle (\eta_1,\zeta_1), (\eta_2,\zeta_2) \rangle 
= \langle K_1,K_2 \rangle + \langle V_{p,1}, V_{p,2}{M_{H,\omega}} \rangle
\]
with the positive-definite weighting matrix:
\begin{equation}
M_{H,\omega} := 2\omega I_r + H^\top H.
\label{eq:weight_matrix}
\end{equation}
Here $I_r$ denotes the identity matrix of size $r\times r$ and we set $\omega=1$ throughout this work. 
This weighted metric provides the geometric foundation for Riemannian optimization algorithms.

Moreover, the Riemannian gradient of the lifted objective 
\(\bar{f}(X,G)=f(X)\) on \(\mathcal{M}_{h}\),
denoted by $\mathrm{grad} \bar{f}(X,G)$, 
is the unique vector in $T_{(X,G)}\mathcal{M}_{h}$
which can be represented by
\begin{equation}
\label{eq:Mh_gradient}
\begin{split}
\bar{K} &= \nabla f(X) V - \langle \nabla f(X) V,\  H \rangle \, H, \\
\bar{V}_{p} &= G \nabla f(X)^{\top} H \, M_{H,\omega}^{-1}  \  ,
\end{split}
\end{equation}
where $(H, V)$ is a representation of $(X, G) \in \mathcal{M}_{h}$, $\nabla f(X)$ denotes the Euclidean gradient of $f$ with respect to $X$ which could be computed via (\ref{eq:mps_gradient_simplified}), and $M_{H,\omega}$ is defined in Eq. \eqref{eq:weight_matrix}.

Within the Riemannian optimization framework, the gradient representation alone is insufficient. 
To preserve the manifold's geometric structure, a \emph{retraction} (see Definition~\ref{def:retraction}) is introduced, which projects the updated point back onto the manifold. 
For any $(X,G)\in \mathcal{M}_h$ and $(\eta, \zeta) \in T_{(X,G)} \mathcal{M}_h$, let $(H,V)$ and $(K,V_p)$ be the representations, respectively.
The first-order retraction for (\ref{eq:equal_problem_p3}) can be given by
\begin{equation}
\label{eq:retraction_nmps}
R_{(X,G)}(\eta, \zeta) =
\Big(
\frac{H+ {K}}{\|H+ {K}\|_F}\widetilde{V}^\top,\;
I - \widetilde{V} \widetilde{V}^\top
\Big),
\end{equation}
where
\begin{equation*}
\widetilde{V} = (V +  {V}_{p}) \big(I_r + {V}_{p}^\top {V}_{p} \big)^{-1/2}.
\end{equation*}
This retraction, together with the local rigidity condition, ensures that the negative Riemannian gradient direction remains a local descent direction of the objective function, thereby supporting convergence to critical points. 
In particular, according to (\ref{eq:retraction_update}), the update rule of Riemannian gradient-based optimization
\begin{equation}
(X,G) \leftarrow{} R_{(X,G)} \Big( - \theta \, \mathrm{grad} \bar{f}(X,G) \Big)
\end{equation}
could be computed easily by replacing $K$ and $V_p$ in Eq. \eqref{eq:retraction_nmps} by $-\theta \bar{K}$ and $-\theta \bar{V}_p$ respectively, where $\theta>0$ is the learning rate and $(\bar{K},\bar{V}_p)$ is the representation of $\mathrm{grad} \bar{f}(X,G)$ defined in Eq. \eqref{eq:Mh_gradient}.

To efficiently solve the optimization problem of UMPS model, 
we apply the space-decoupling method described above to tackle the subproblem (\ref{eq:nmps_subproblem_compact}) arising during UMPS training. 
By rewriting the target fourth-order tensor $A^{(k,k+1)}$ in a matrix form that satisfies 
$\|A_{\langle 2 \rangle}^{(k,k+1)}\|_F = 1$ and $\mathrm{rank}(A_{\langle 2 \rangle}^{(k,k+1)}) \le r$, 
the original problem is equivalently transformed into a Riemannian optimization problem on the manifold $\mathcal{M}_h$, 
ensuring that structural constraints are automatically satisfied throughout the iteration process. It is worth noting that (\ref{eq:nmps_subproblem_compact}) does not directly compute the optimal solution for $A_{\langle 2 \rangle}^{(k,k+1)}$; rather, it aims to progressively approach optimality via a Riemannian gradient-based method.

The algorithm for solving the UMPS problem using the space-decoupling method 
is presented in Algorithm~\ref{alg:nmps_training}. Notably, the algorithm preserves 
the mixed-canonical form of the MPS throughout the procedure. When performing the SVD on the updated 
$A_{\langle 2 \rangle}^{(k,k+1)}$, the truncation step is omitted, 
thanks to the maintenance of the low-rank structure. 
This ensures that no energy is lost in $\mathrm{\Psi}(\mathbf{v})$ during the update. 
Let $n=\max\{n_1, \ldots,n_d\}$. Table \ref{tab:complexity} summarizes the computational complexity of the basic operations involved in Algorithm 1. As shown in Table \ref{tab:complexity}, when the training dataset is large, the dominant computational cost is $O(|\mathcal{T}|dn^2 r_{\max}^2)$, arising from the computation of the Euclidean gradient $\nabla f(X)$. As a result, the total time complexity of Algorithm \ref{alg:nmps_training} is $O\left(l_{\max}d(|\mathcal{T}|dn^2 r_{\max}^2+n^2  r_{\max}^3)\right)$.

\begin{algorithm}
\caption{UMPS training via space-decoupling(UMPS-SD)}
\label{alg:nmps_training}
\begin{algorithmic}[1]
\REQUIRE{Training dataset $\mathcal{T} = \{\mathbf{v}^1, \dots, \mathbf{v}^{|\mathcal{T}|}\}$, maximum bond dimension $r_{\max}$, learning rate $\theta$, maximum number of loops $l_{\max}$.} 
\ENSURE{Optimized UMPS cores $A^{(1)}, A^{(2)}, \dots, A^{(d)}$ and target distribution $P(\mathbf{v}) = |\Psi(\mathbf{v})|^2$.}
\STATE{Initialize cores with right-canonical $A^{(2)}, \ldots, A^{(d)}$ and $\|A^{(1)}\|_F = 1$.}
\FOR{$i = 1$ \textbf{to} $l_{\max}$}
    \FOR{$k = 1$ \textbf{to} $d-1$ \textbf{and back to} $1$}
        \STATE \hspace{0.1cm} Parameterize $A^{(k,k+1)}$ as matrix $X = A_{\langle 2 \rangle}^{(k,k+1)}$.
        \STATE \hspace{0.1cm} Compute SVD: $X = H \Sigma V^\top$.
        \STATE \hspace{0.1cm} Normalize $H \gets H / \|H\|_F$.
        \STATE \hspace{0.1cm} Set $G = I - VV^\top$, obtaining $(X,G) \in \mathcal{M}_{h}$.
        \STATE \hspace{0.1cm} Compute the representation $(\bar{K}, \bar{V}_{p})$ of Riemannian \\ \hspace{0.5cm} gradient $\mathrm{grad} \bar{f}(X,G)$ via Eq. \eqref{eq:Mh_gradient}.
        \STATE \hspace{0.1cm} Update $(X,G)$ via retraction: \\
               \hspace{0.5cm} $X \gets \frac{H-\theta \bar{K}}{\|H-\theta \bar{K}\|_F}\widetilde{V}^\top$,  \quad $G \gets I - \widetilde{V} \widetilde{V}^\top$,  \\
               \hspace{0.5cm} where $\widetilde{V} = (V - \theta \bar{V}_{p})(I_r + \theta^{2} \bar{V}_{p}^\top \bar{V}_{p})^{-1/2}$.
        \STATE \hspace{0.1cm} Compute SVD: $X = U_k \Sigma_k V_k^\top$.
        \IF{$k$ is increasing ($k = 1$ \textbf{to} $d-1$)}
            \STATE Reshape $U_k$ to $A^{(k)}$ and reshape $\Sigma_k V_{k}^\top$ to  $A^{(k+1)}$.
        \ELSE
            \STATE Reshape $U_k \Sigma_k$ to $A^{(k)}$ and reshape $V_{k}^\top$ to  $A^{(k+1)}$.
        \ENDIF
    \ENDFOR
\ENDFOR
\end{algorithmic}
\end{algorithm}

\begin{table}[t]
\caption{\label{tab:complexity} Summary of time and space complexity for the main operations and objects of Algorithm \ref{alg:nmps_training}
.}
\centering
\begin{tabular}{>{\centering\arraybackslash}m{5cm}|>{\centering\arraybackslash}m{2.5cm}}
\hline
Operation & Time complexity    \\  
\hline  \vspace{0.8mm}
Two SVDs (lines 5 and 10)  &  $O(2nr_{\max}^2+r_{\max}^3)$  \\
\hline  \vspace{0.8mm}
Compute $\nabla f(X)$ via (\ref{eq:mps_gradient_simplified}) & $O(|\mathcal{T}|dn^2 r_{\max}^2)$ \\
\hline  \vspace{0.8mm}
Compute $(\bar{K}, \bar{V}_{p})$ via (\ref{eq:Mh_gradient}) (line 8) & $O(n^2r_{\max}^3) $\\
\hline  \vspace{0.8mm}
Update $X$ (line 9) &  $O(n^2r_{\max}^3))$  \\
\hline \hline
Object & Space complexity \\
\hline \vspace{0.5mm}
MPS cores $A^{(1)}, \dots, A^{(d)}$ & $dnr_{\max}^2$ \\
\hline \vspace{0.5mm}
Representation of $(X,G) \in \mathcal{M}_{h}$: $H, \Sigma, V$ & $(2n+1)r_{\max}^2$ \\
\hline  \vspace{0.5mm}
Representation of $\mathrm{grad} \bar{f}(X,G)$: $\bar{K}, \bar{V}_{p}$ &  $2nr_{\max}^2$     \\
\hline
\end{tabular}
\end{table}

\subsection{Generation and Sampling}
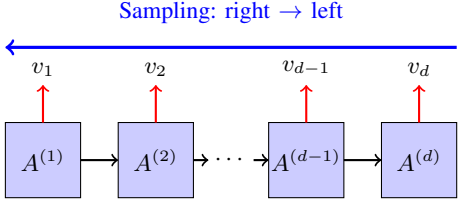
\begin{figure}[t]
\centering
\begin{tikzpicture}[scale=1, every node/.style={font=\small}]
\draw[fill=blue!20] (0,0) rectangle (1,1);
\node at (0.5,0.5) {$A^{(1)}$};
\draw[fill=blue!20] (1.5,0) rectangle (2.5,1);
\node at (2,0.5) {$A^{(2)}$};
\node at (3,0.5) {$\cdots$};
\draw[fill=blue!20] (3.5,0) rectangle (4.5,1);
\node at (4,0.5) {$A^{(d-1)}$};
\draw[fill=blue!20] (5,0) rectangle (6,1);
\node at (5.5,0.5) {$A^{(d)}$};
\draw[->, thick] (1,0.5) -- (1.5,0.5);
\draw[->, thick] (2.5,0.5) -- (2.7,0.5);
\draw[->, thick] (3.3,0.5) -- (3.5,0.5);
\draw[->, thick] (4.5,0.5) -- (5,0.5);
\draw[->, thick, red] (0.5,1) -- (0.5,1.5);
\node[above] at (0.5,1.5) {$v_1$};
\draw[->, thick, red] (2,1) -- (2,1.5);
\node[above] at (2,1.5) {$v_2$};
\draw[->, thick, red] (4,1) -- (4,1.5);
\node[above] at (4,1.5) {$v_{d-1}$};
\draw[->, thick, red] (5.5,1) -- (5.5,1.5);
\node[above] at (5.5,1.5) {$v_d$};
\draw[->, thick, blue, very thick] (6,2) -- (0,2);
\node[above, blue] at (3,2.2) {Sampling: right $\to$ left};

\end{tikzpicture}
\caption{Schematic of sequential sample generation from a UMPS. All tensors except $A^{(d)}$ are gauged to be left-canonical, Each tensor $A^{(i)}$ generates one bit $v_i$ conditionally on the bits to its right.}
\label{fig:mps_sampling_right_to_left_explicit}
\end{figure}

Once the generative model is well-trained, the next step is to generate new samples according to the learned probability.
In many popular generative models, especially energy-based models such as restricted Boltzmann machines (RBM)~\cite{fischer2012introduction}, generating new samples often requires running Markov chain Monte Carlo (MCMC) from an initial configuration due to the intractability of the partition function.  
One advantage of MPS-based generative modeling is that the partition function can be exactly computed with linear complexity in the system size~\cite{han2018unsupervised}. This allows a direct and efficient sampling method, which generates a sample bit by bit from one end of the UMPS to the other (see Fig.~\ref{fig:mps_sampling_right_to_left_explicit}). 

For the proposed generative model based on UMPS, the probability is given by
$P(\mathbf{v}) = |\Psi(\mathbf{v};A^{(1)}, \cdots, A^{(d)})|^2. $
The detailed sampling procedure is as follows:
\begin{enumerate}[label=(\roman*)]
    \item Start from one end, e.g., the $d$th bit. Sample $v_d$ from the marginal probability
    \[
        P(v_d) = \sum_{v_1,\dots,v_{d-1}} P(v_1,\dots,v_d) = \|A^{(d)}(v_d)\|_F^2,
    \]
    where all tensors except $A^{(d)}$ are gauged to be left-canonical. Note that inverse sampling is similar, but all tensors except $A^{(1)}$ are gauged to be right-canonical. 
    
    \item Given $v_d$, move to the $(d-1)$th bit. More generally, given the values $v_k, v_{k+1},\dots,v_d$, the $(k-1)$th bit is sampled according to the conditional probability
    \[
        P(v_{k-1}\,|\,v_k, \dots, v_d) = \frac{P(v_{k-1},v_k,\dots,v_d)}{P(v_k, \dots, v_d)}.
    \]
    
    \item Thanks to the canonical condition, the marginal probability can be efficiently computed as
    \[
        P(v_k, \dots, v_d) = \|A^{(k)}(v_k) \cdots A^{(d)}(v_d)\|_F^2. \]

    \item By iteratively multiplying $A^{(k-1)}(v_{k-1})$ from the left and computing the squared norm, one obtains $P(v_{k-1},v_k,\dots,v_d)$, and samples $v_{k-1}$ accordingly.
\end{enumerate}

Repeating this process from right to left produces a sample that strictly obeys the probability distribution defined by the MPS. This approach is also applicable to inference tasks when part of the bits are given. Even if unknown bits are interleaved among known bits, marginal probabilities remain tractable due to efficient contraction of ladder-shaped tensor networks~\cite{orus2014practical}. 

\section{Numerical experiments}
To validate the reliability of the UMPS-SD algorithm, we apply it to several commonly used datasets, including the Bars and Stripes (BAS) dataset\footnote{\url{https://pennylane.ai/datasets/bars-and-stripes}.} and the EMNIST dataset\footnote{\url{https://www.nist.gov/itl/products-and-services/emnist-dataset}.}.
In all experiments, each grayscale or binary image of size $n \times n$ is reshaped into a one-dimensional vector of length $n^2$
using MATLAB's reshape function, where the columns of the original image are stacked sequentially. This preprocessing step ensures compatibility with the MPS-based generative model, as illustrated in Fig.~\ref{fig:flatten4}.
To ensure a fair comparison, all baseline experiments are performed using the same initial MPS and identical hyperparameter settings.

All experiments were conducted using Matlab R2022a on a computer with an Intel(R) Core(TM) i7-14700F @2.10GHz CPU, 16 GB of RAM and an NVIDIA GeForce RTX 4060Ti GPU with 8 GB of VRAM. The source code used in this work is publicly available at: \url{https://github.com/haotong-Duan/UnitaryMPS-SpaceDecoupling}.

\begin{figure}[t]
\centering
\subfloat[]{%
    \includegraphics[width=0.43\linewidth]{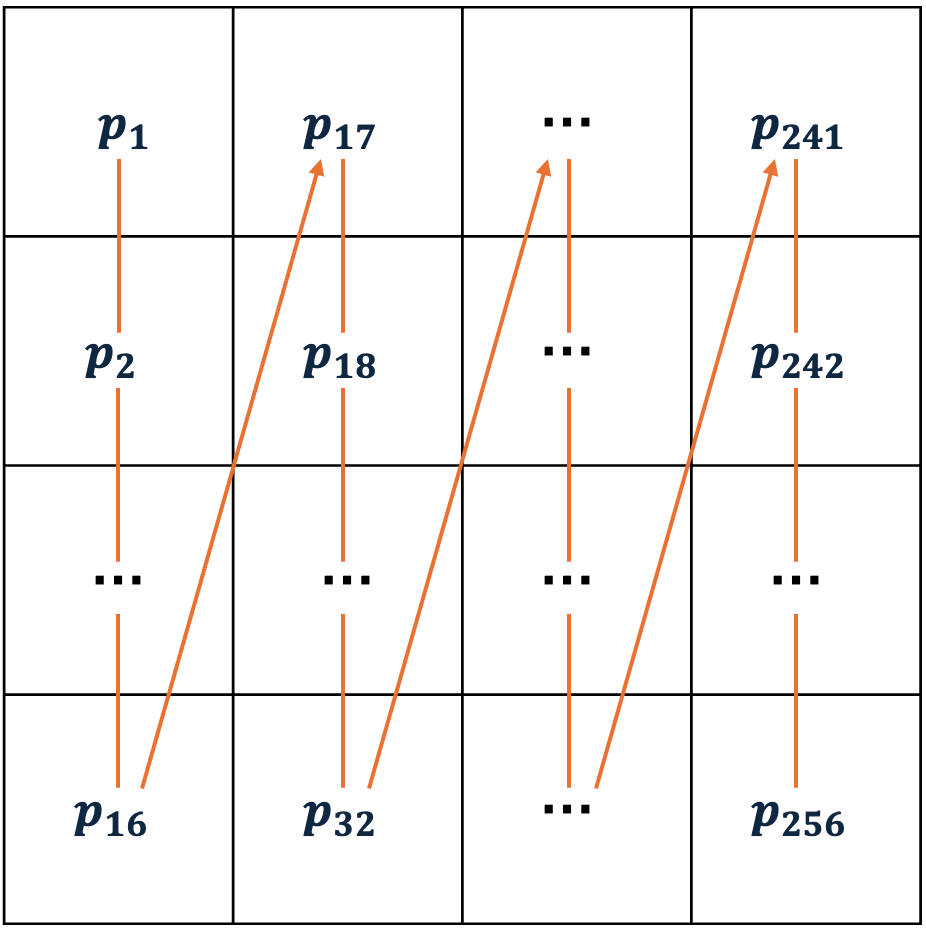}%
    \label{fig:flatten4}
}
\hfil
\subfloat[]{%
    \includegraphics[width=0.47\linewidth]{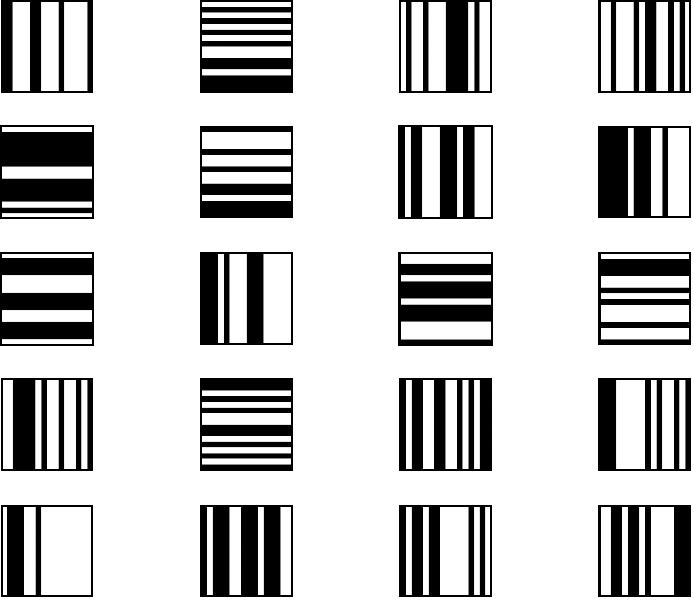}%
    \label{fig:bars_stripes_16}
}
\caption{(a) Illustration of column-major flattening of a $16\times16$ image into a 256-dimensional vector $v = (p_1, p_2, \dots, p_{16},p_{17}, \dots , p_{256})^\top$. (b) A subset of the Bars-and-Stripes dataset, with images of size $16 \times 16$.}
\label{fig:bar_and_flatten}
\end{figure}

\subsection{Bars and Stripes}
\begin{figure}[t]
\centering
\subfloat[]{%
    \includegraphics[width=0.55\linewidth]{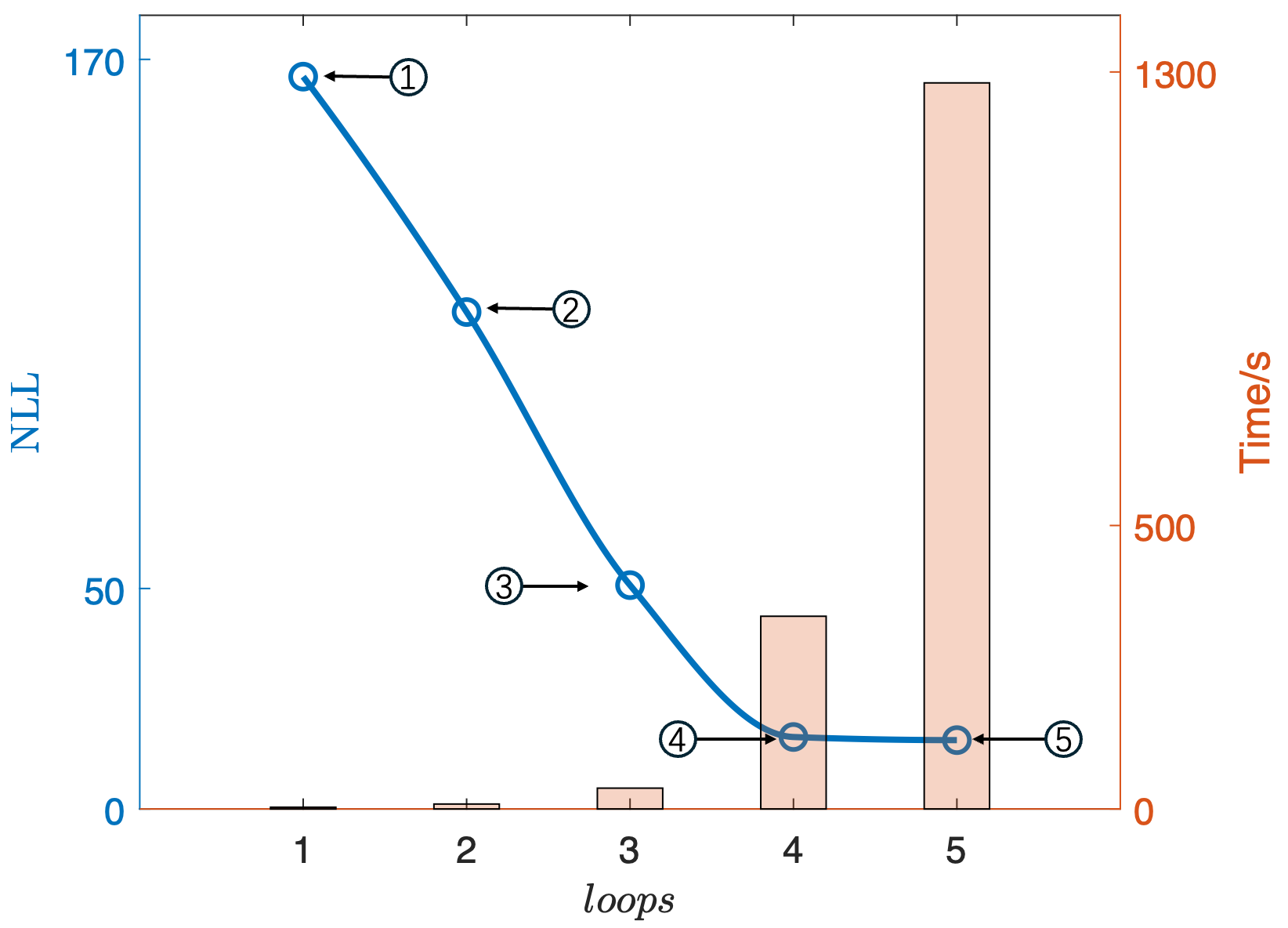}%
    \label{fig:bas_nll_loops}
}
\subfloat[]{%
    \raisebox{8pt}{\includegraphics[width=0.35\linewidth]{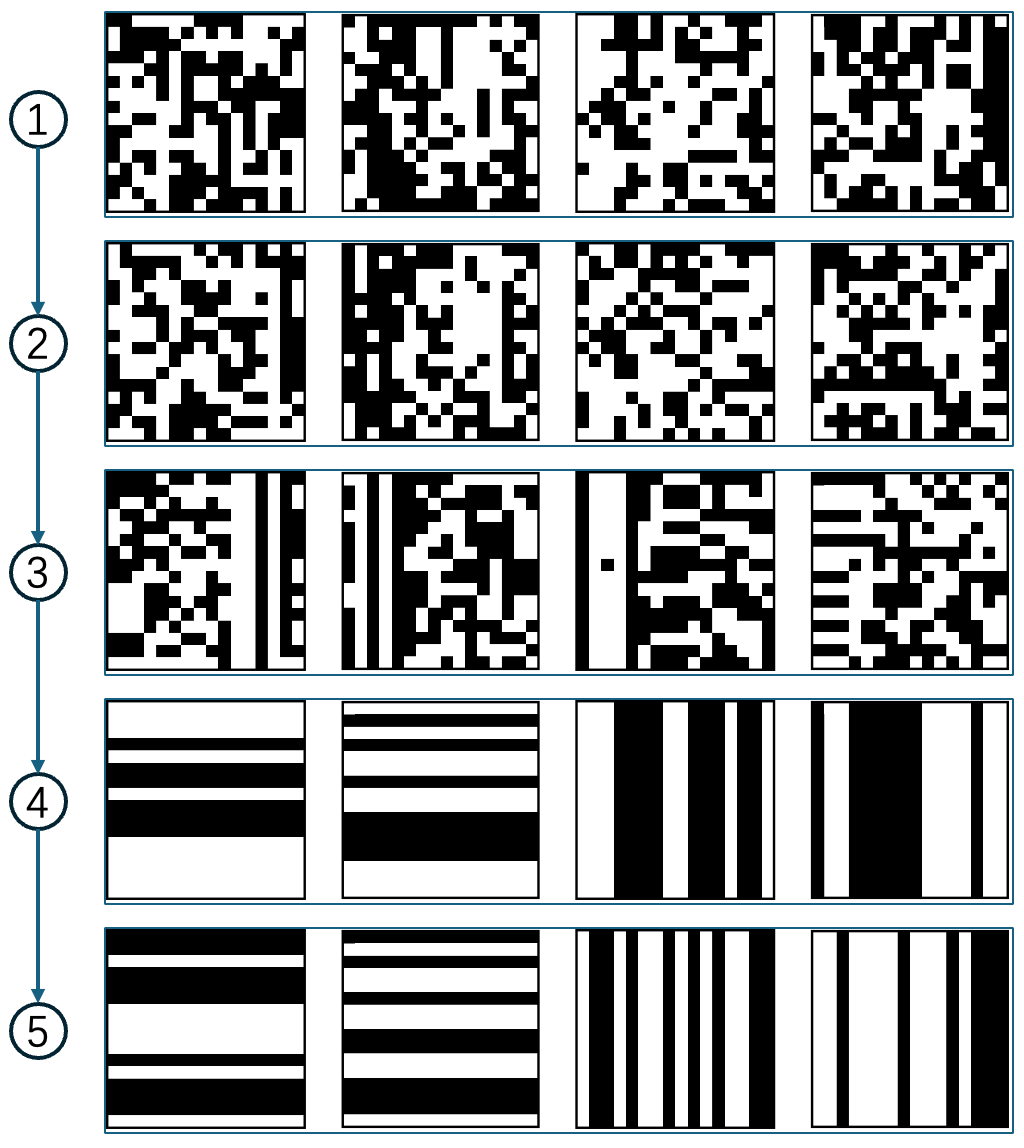}}
    \label{fig:bas_16_1_5}
}
\caption{(a) The blue curve represents the average NLL as a function of the number of loops, while the orange bars indicate the cumulative computation time required to reach the corresponding loops. (b) Images generated by the model at loops from 1 to 5 as indicated in (a). In experiments above, we set $r_{\rm max}=500$, $|\mathcal{T}|=400$, and the learning rate to $0.007$.}
\label{fig:bas_nll_loops_generate}
\end{figure}

\begin{table}[t]
\caption{\label{tab:bond_dims} Under the experimental conditions shown in Fig.~\ref{fig:bas_nll_loops_generate}, as the number of loops increases, the table shows the average bond dimension $r_{\rm mean}$ and the largest bond dimension $r_{\rm max}$ in the UMPS updated up to this step.}
\centering
\begin{tabular}{c|c|c|c|c|c}
\hline
$loops$ & 1 & 2 & 3 & 4 & 5 \\  
\hline
$r_{\rm mean}$ & 7.8945 & 31.1133 & 122.4883 & 470.7852 & 470.7852 \\
$r_{\rm max}$  & 8 & 32 & 128 & 500 & 500 \\
\hline
\end{tabular}
\end{table}

\begin{figure}[t]
\centering
\subfloat[]{%
    \includegraphics[width=0.48\linewidth]{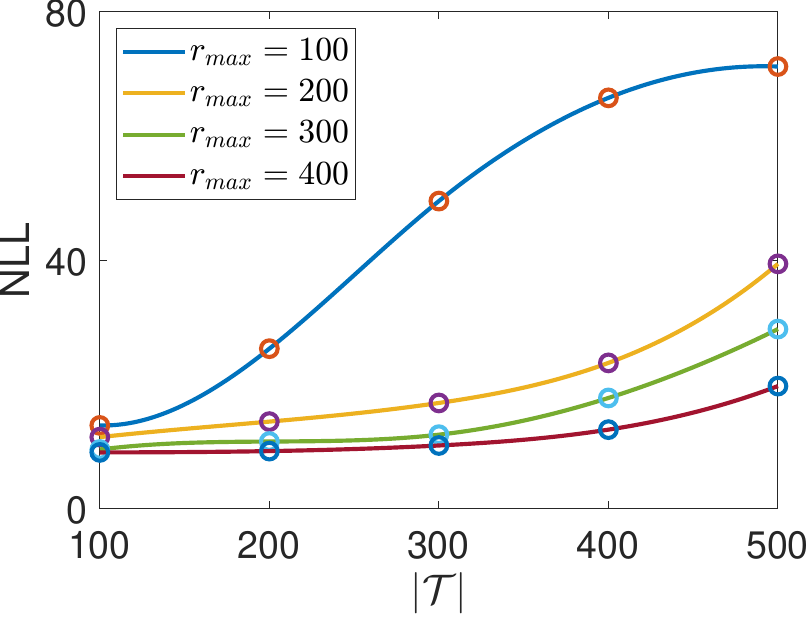}%
    \label{fig:bas_T_nll}
}
\hfill
\subfloat[]{%
    \includegraphics[width=0.48\linewidth]{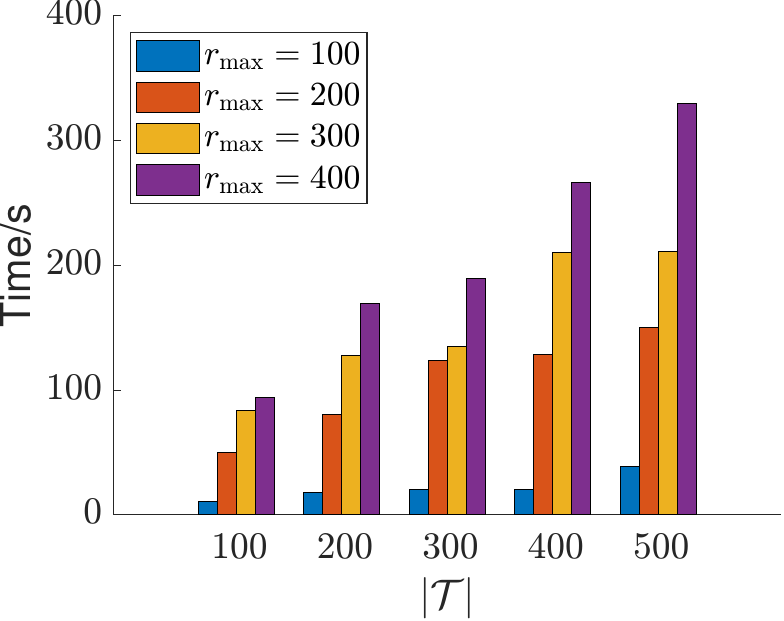}%
    \label{fig:bas_times}
}
\caption{(a) Average NLL as a function of the dataset size under different $r_{\rm max}$. For each curve, the circular markers from left to right correspond to the NLL values obtained at $|\mathcal{T}| = 100, 200, 300, 400,$ and $500$, respectively. (b) The average computation time per experiment shown in (a).}
\label{fig:bas_nll_time_16}
\end{figure}

To evaluate the performance of the proposed UMPS-SD algorithm, we conduct experiments on the \textit{Bars and Stripes} (BAS) dataset. The BAS dataset used in this work consists of all possible horizontal-bar and vertical-bar binary images of size $16\times16$. Each image contains $256$ binary pixels, and the complete dataset includes $2\times2^{16}-2 = 131070$ distinct stripe patterns. 

Based on the BAS dataset described above, we examine the decrease of the NLL with respect to the number of loops under the UMPS--SD algorithm. As shown in Fig.~\ref{fig:bas_nll_loops}, the NLL exhibits a rapid decline at the beginning and then gradually levels off. We generate samples at each loop, and in the early stages the synthesized images contain substantial noise. However, once $loops \ge 4$, the model is able to produce valid images (Fig.~\ref{fig:bas_16_1_5}) that are consistent with the characteristics of the BAS dataset. In fact, although the overall computational time increases significantly as the number of loops grows, the average bond dimension $r_{\rm mean}$ of the MPS tends to stabilize, and the bond dimensions do not exceed $r_{\rm max}$ (Table~\ref{tab:bond_dims}), demonstrating that the algorithm effectively preserves the low-rank structure while the model provides sufficient representational capacity to generate high-quality images.

To further investigate how the NLL varies with the dataset size under different maximum bond dimensions $r_{\max}$, as well as the time consumed per training update. Specifically, for multiple choices of $r_{\max}$, we train the UMPS model while gradually increasing $|\mathcal{T}|$, and record the corresponding NLL values. This allows us to examine the modeling capability of the MPS under different expressive constraints. To reduce computational cost, we set $l_{\max}=4$ in the experiments above. At this setting, the algorithm does not convergence completely, yet it already demonstrates satisfactory performance. The results are shown in Fig.~\ref{fig:bas_T_nll}.
Under the same experimental settings, we further measure the computational cost of each training update. The results, presented in Fig.~\ref{fig:bas_times}, illustrate how different values of $r_{\max}$ influence the convergence efficiency of the algorithm.

As shown in Fig.~\ref{fig:bas_nll_time_16}, when $|\mathcal{T}|>r_{\rm max}$, the NLL exhibits a noticeable increasing trend, indicating that the model's capacity is insufficient. To achieve better results, a larger $r_{\rm max}$ should be employed. By observing the update times of the algorithm in each experiment, we find that even in the most complex model, the UMPS-SD algorithm can reach an acceptable performance within 350 seconds.

\subsection{EMNIST dataset}
To systematically evaluate the performance of the proposed UMPS\mbox{-}SD algorithm, 
we conduct a comparative study against the baseline method introduced by Han \emph{et al.} ~\cite{han2018unsupervised}. 
The experiments assess two primary aspects---convergence efficiency and generative quality.

\begin{enumerate}[label={}, leftmargin=0pt]
    \item \textbf{Model parameters:}
    The maximum bond dimension $r_{\rm max}$ of MPS serves as a key hyperparameter. We evaluate the algorithm under 
    several choices of the maximum bond dimension in order to examine its performance across 
    different model complexities.

    \item \textbf{Evaluation metrics:}
    Convergence is quantified using the negative log-likelihood on the test set, while the 
    total training time is recorded to assess computational efficiency.
\end{enumerate}

\begin{figure}[!t]
\centering
\subfloat[]{%
    \includegraphics[width=0.48\linewidth]{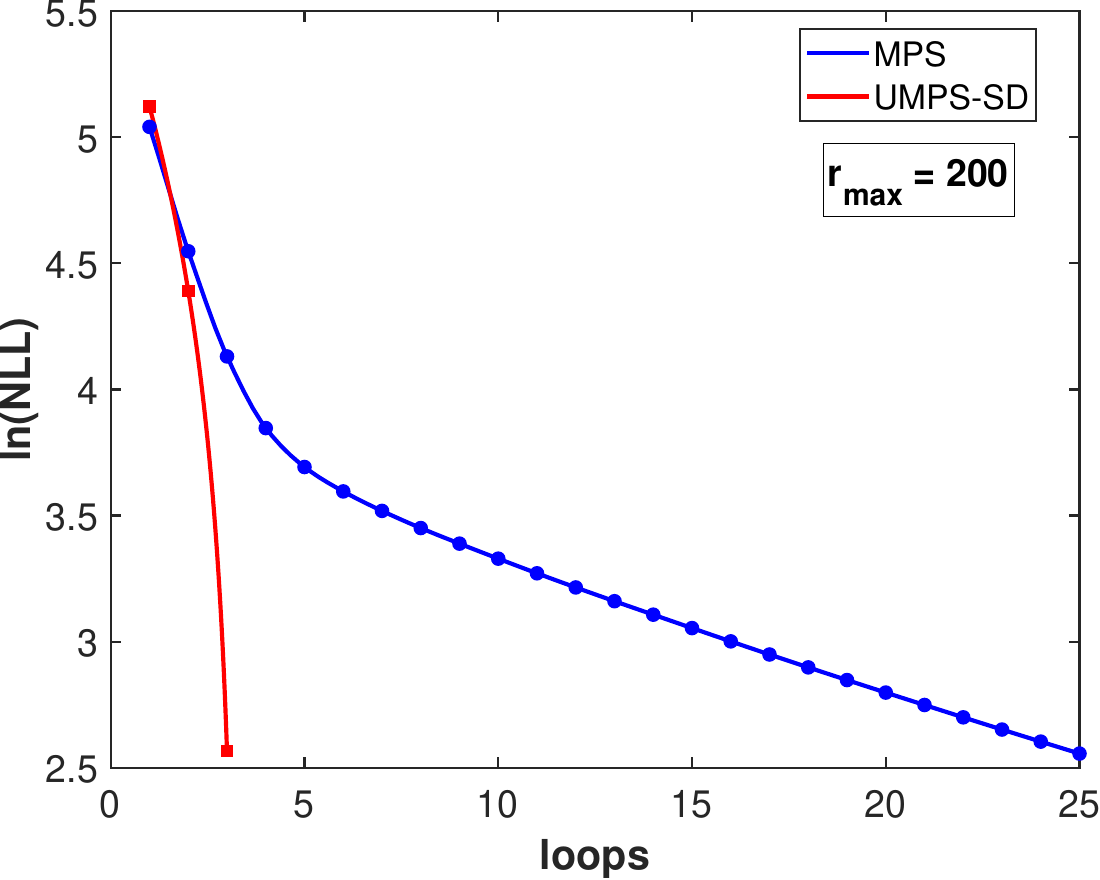}%
    \label{fig:nll_log_200}
}
\hfill
\subfloat[]{%
    \includegraphics[width=0.48\linewidth]{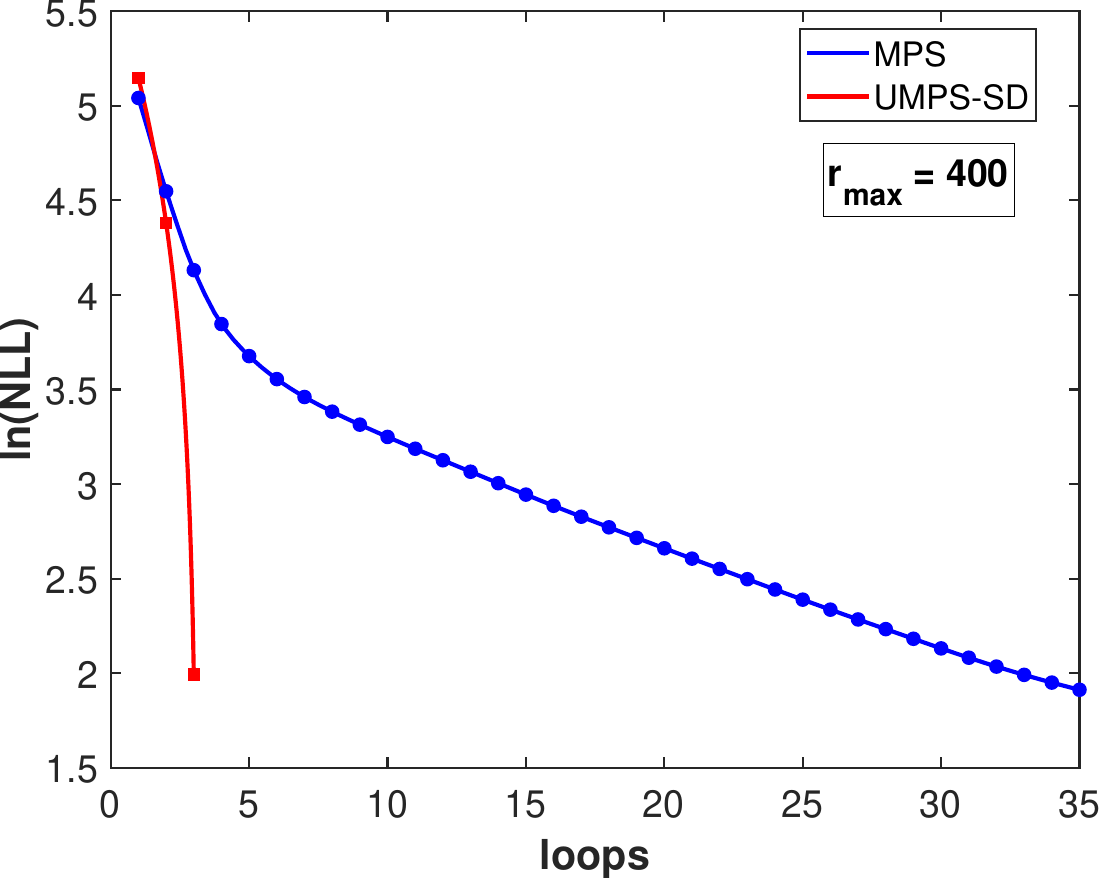}%
    \label{fig:nll_log_400}
}
\caption{Panels (a) and (b) correspond to the negative log-likelihood (NLL) convergence of the optimization algorithm of the MPS model and UMPS-SD algorithm for maximum bond dimensions $r_{\rm max}=200$ and $r_{\rm max}=400$, respectively, with a training set of $|\mathcal{T}|=100$ samples. The blue line represents the optimization algorithm of the MPS model, and the red line represents the UMPS-SD algorithm.}
\label{fig:nll_log}
\end{figure}

\subsubsection{Convergence efficiency}
To evaluate the convergence efficiency of the Algorithm \ref{alg:nmps_training}, we conducted experiments under identical settings and preconditions. The maximum bond dimension, which governs the model complexity, serves as the key hyperparameter. Two sets of experiments were performed with different maximum bond dimensions of. The convergence behavior of the negative log-likelihood was monitored to assess the performance of each method, as shown in Fig.~\ref{fig:nll_log}.

The experimental results clearly demonstrate that the proposed UMPS\text{-}SD algorithm exhibits significantly faster convergence than the standard MPS-based method. UMPS\text{-}SD displays a steep decline in the NLL during the initial phase of training, whereas the updates in the MPS model progress much more gradually. This indicates that, for the same number of training iterations, the proposed algorithm approaches the optimal solution more rapidly. To provide a more direct assessment of convergence speed, Table ~\ref{tab:nll_time} compares the temporal evolution of the objective function for both algorithms under identical experimental settings.

\begin{table}[t]
\caption{\label{tab:nll_time} Under the same experimental conditions as Fig. \ref{fig:nll_log_200}, NLL and training time (s) for MPS and UMPS-SD models with learning rate $1\times 10^{-3}$. The time metric indicates the cumulative time required to reach this NLL value.}
\centering
\begin{tabular}{cccccc}
\hline
\textbf{$loops$} & \multicolumn{2}{c}{\textbf{MPS}} & \multicolumn{2}{c}{\textbf{UMPS-SD}} \\ 
\cline{2-5}
 & \textbf{NLL} & \textbf{Time/s} & \textbf{NLL} & \textbf{Time/s} \\ \hline
1  & 154.74 & 1.30    & 167.70 & 1.68    \\
2  & 94.48  & 3.14    & 80.69  & 5.61    \\
3  & 62.25  & 7.86    & 13.01  & 30.25   \\
4  & 46.85  & 26.08   & -      & -       \\
5  & 40.14  & 53.61   & -      & -       \\
6  & 36.44  & 84.13   & -      & -       \\
\vdots & \vdots & \vdots & - & - \\
23 & 14.17  & 754.18  & -      & -       \\
24 & 13.51  & 794.47  & -      & -       \\
25 & 12.88  & 831.44  & -      & -       \\
\hline
\end{tabular}
\end{table}

The comparison between the two algorithms demonstrates that the proposed method achieves a substantially faster convergence. Specifically, it reduces the NLL from 167.70 to 13.01 within only three training loops, whereas the MPS method attains an NLL of 62.25 after the same number of loops and requires 25 loops to reach a comparable accuracy (NLL = 12.88). Although each iteration of the proposed algorithm incurs slightly higher computational cost, its overall convergence is markedly faster, up to 27$\times$ more efficient than the original algorithm. Moreover, with an appropriately chosen learning rate, the proposed approach attains both higher convergence accuracy and faster descent than the standard MPS algorithm, as shown in Table \ref{tab:nll_lr}. Collectively, these results indicate that the proposed method follows a more efficient optimization trajectory, significantly accelerating the training process and outperforming conventional approaches in terms of both convergence speed and computational efficiency.

\begin{table}[t]
\caption{\label{tab:nll_lr} Under the same experimental conditions as Fig.~\ref{fig:nll_log_200}, from (1) to (3), the learning rates are set to $2.5\times 10^{-4}$, $5\times 10^{-4}$, and $2\times 10^{-3}$, respectively.}
\centering
\begin{tabular}{c|cc|cc|cc}
\hline
\textbf{$loops$} 
& \multicolumn{2}{c|}{\textbf{UMPS-SD(1)}} 
& \multicolumn{2}{c|}{\textbf{UMPS-SD(2)}} 
& \multicolumn{2}{c}{\textbf{UMPS-SD(3)}} 
\\ \cline{2-7}
& \textbf{NLL} & \textbf{Time/s} 
& \textbf{NLL} & \textbf{Time/s} 
& \textbf{NLL} & \textbf{Time/s} 
\\ \hline
1  & 171.99  & 1.62   & 164.41 & 1.59   & 165.76 & 1.62   \\
2  & 79.90   & 5.59   & 81.14  & 5.48   & 79.97  & 5.55   \\
3  & 7.35    & 30.73  & 6.568  & 30.23  & 12.30  & 30.11  \\
\hline
\end{tabular}
\end{table}

\subsubsection{Generalization ability}
\begin{figure}[t]
\centering
\subfloat[]{%
    \fbox{\includegraphics[width=0.421\linewidth]{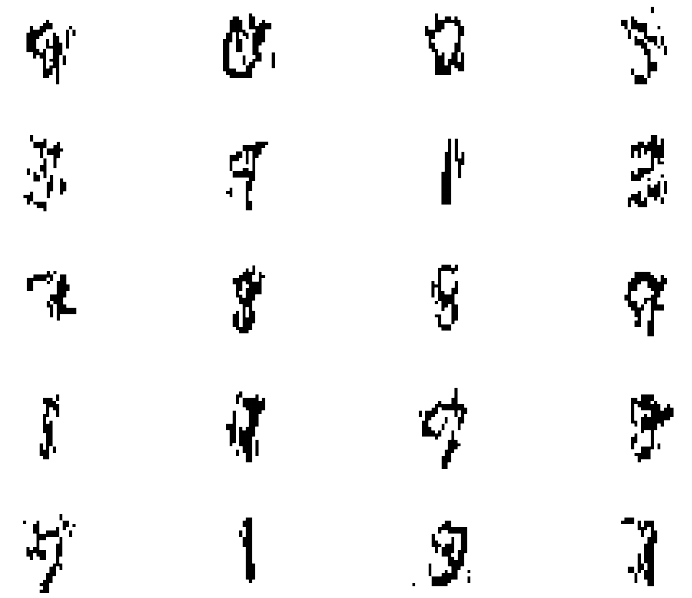}}%
    \label{fig:generate_400_mps}
}
\hfill
\subfloat[]{%
    \fbox{\includegraphics[width=0.421\linewidth]{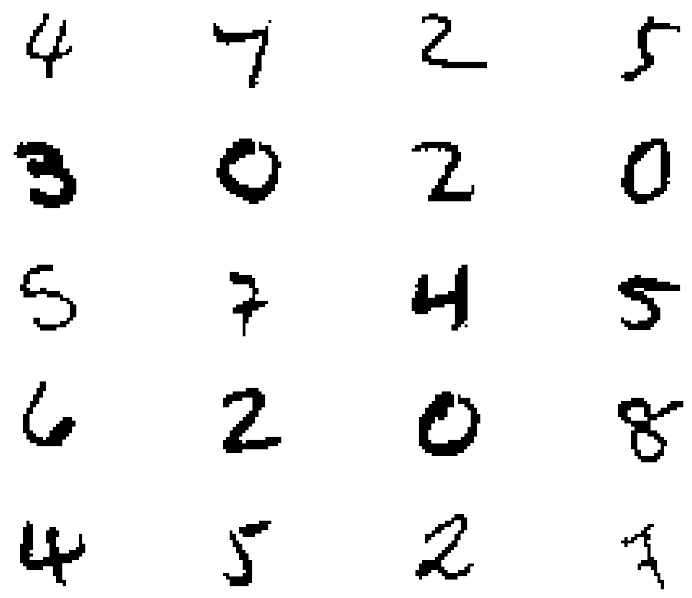}}%
    \label{fig:generate_400}
}
\caption{(a) Generated from the MPS model with a maximum bond dimension $r_{\max}=400$, trained on a subset of $|\mathcal{T}|=300$ samples randomly selected from the EMNIST dataset for $l_{\max} = 25$. (b) Generated by the proposed UMPS model with $l_{\max} = 4$, with all other settings kept identical to (a). }
\label{fig:two_figs}
\end{figure}

Generative modeling constitutes one of the primary tasks in machine learning. To evaluate the generative capability of our method, Fig. \ref{fig:generate_400_mps} shows samples generated by the MPS model. For comparison, we then use the trained UMPS model to directly generate samples, as illustrated in Fig. \ref{fig:generate_400}.
In the reconstruction experiment, the pixels in the right half of each image (i.e., positions 393 to 784) are provided as input, and the trained UMPS model is required to infer and complete the missing left half. 
We further compare the reconstructed results with those obtained using the conventional MPS model under identical settings, as presented in Fig. \ref{fig:reconstruct_contrast}.

\begin{figure}[t]
\centering
\subfloat[]{%
    \fbox{\includegraphics[width=0.421\linewidth]{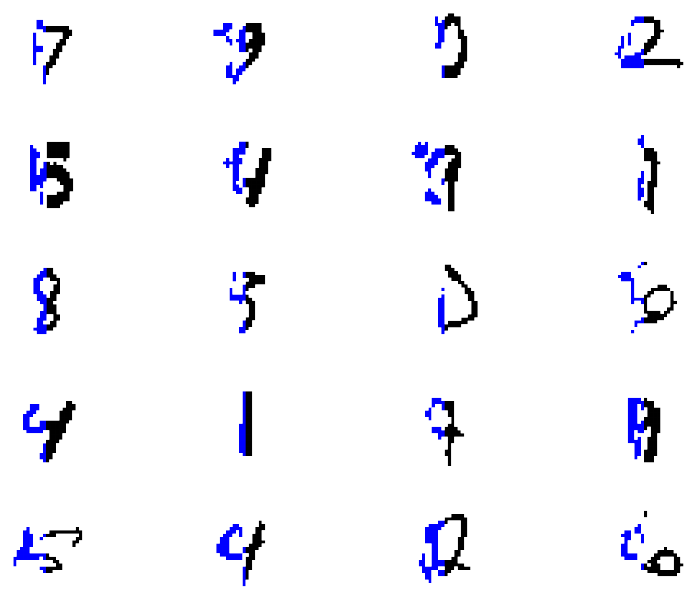}}%
    \label{fig:mps_20025}
}
\hfill
\subfloat[]{%
    \fbox{\includegraphics[width=0.421\linewidth]{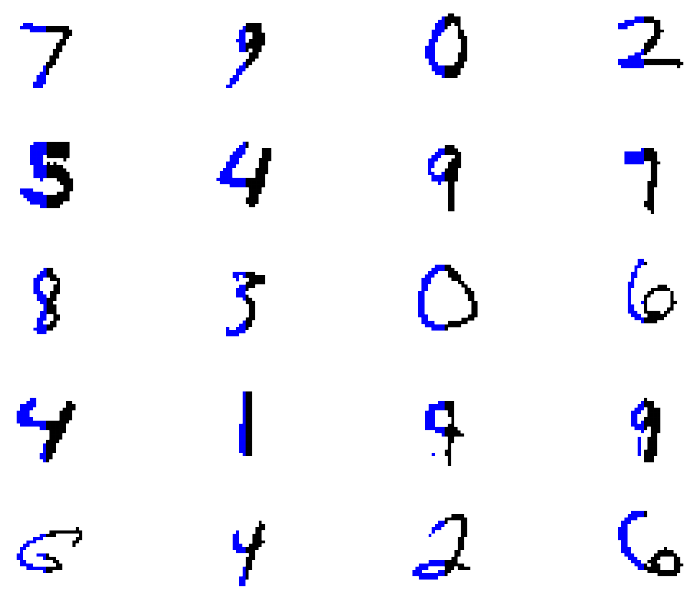}}%
    \label{fig:nmps_2003}
}
\caption{Image reconstruction from half of the images, which are selected from the training set. For both experiments, the maximum bond dimension is set to $r_{\rm max} = 200$ and $|\mathcal{T}| = 100$. The update details are summarized in Table~\ref{tab:nll_time}. The black areas in the image are the given pixels, and the blue areas are the pixels completed by the model. (a) The image reconstructed by the MPS model; (b) The image reconstructed by the UMPS model.}
\label{fig:reconstruct_contrast}
\end{figure}

From Figs. \ref{fig:two_figs} and \ref{fig:reconstruct_contrast}, it is evident that the MPS model performs well only on a small subset of images, while producing noticeable noise and frequent reconstruction errors on most others. In contrast, although the UMPS model occasionally exhibits minor reconstruction errors in a few individual cases, it consistently demonstrates superior detail recovery and generates significantly less noise in the completed regions. For instance, for the digits ``4" and ``5" in Fig. \ref{fig:reconstruct_contrast}, the left halves completed by the UMPS model connect more naturally with the original right halves, and the stroke contours are clearer and more complete. In comparison, the MPS results may display distorted or broken strokes. For simpler digits such as ``1", UMPS model achieves highly accurate completion, whereas the MPS model may erroneously reconstruct them into shapes resembling other digits, sometimes resulting in nearly unrecognizable forms. For the image located in the fourth row and third column of Fig. \ref{fig:nmps_2003}, the missing region is reconstructed as a digit “9”, whereas the original image corresponds to a “7”. However, since the original sample does not follow a standard writing style for the digit “7”, this completion can still be considered reasonable. 

\begin{figure*}
\includegraphics[width=\linewidth]{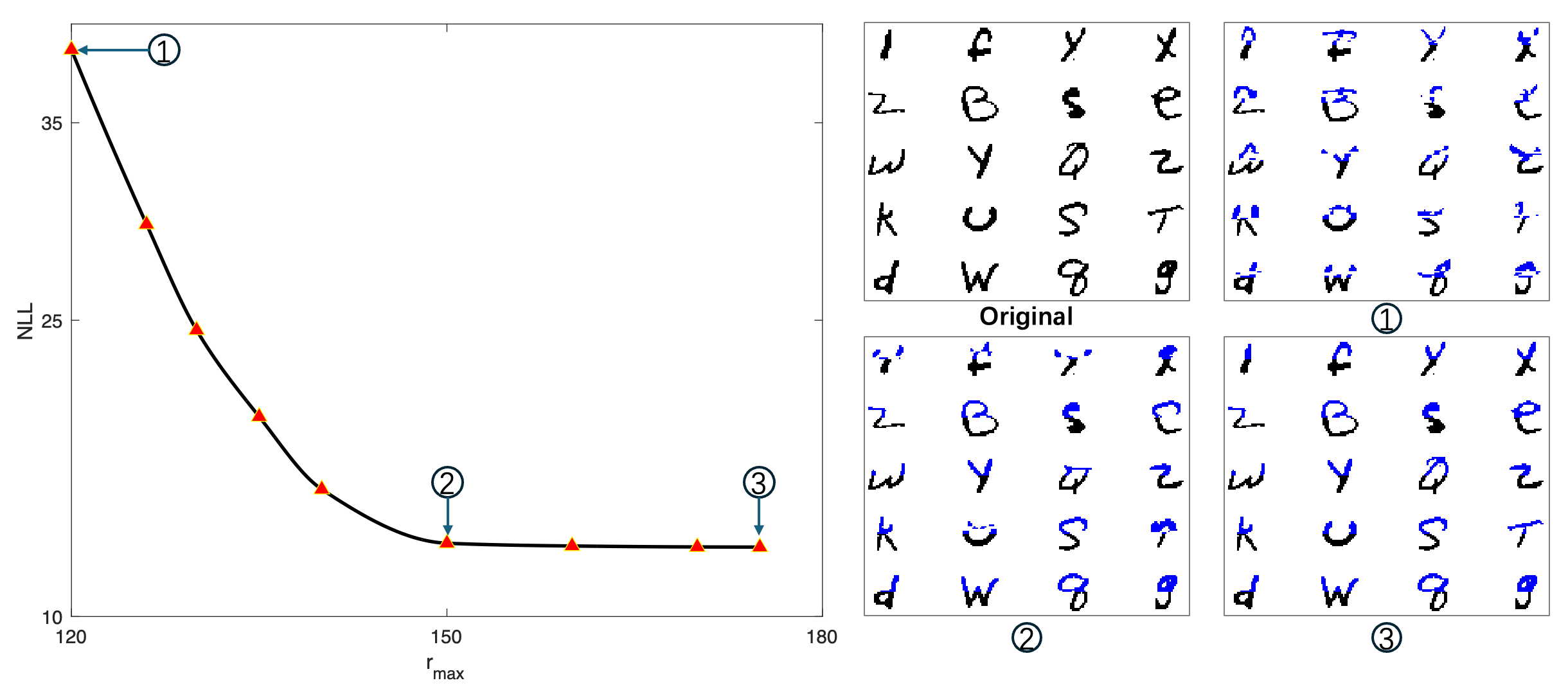}
\caption{\label{fig:descent}The curves illustrate how the NLL varies with the bond dimension $r_{\rm max}$, where the training set $\mathcal{T}$ is selected from the EMNIST-Letters dataset with a fixed size of $|\mathcal{T}| = 150$. All marked points correspond to the setting of $l_{\max}=4$. In the images, black pixels denote the given observations, while the blue pixels are reconstructed by the UMPS model. The figure presents the reconstruction results obtained for $r_{\rm max} = 120$, $150$, and $175$, respectively.
}
\end{figure*}

The proposed model also performs well on the more diverse EMNIST-Letters dataset. To highlight the efficiency of the algorithm, we restrict the training procedure to only $l_{\max}=4$. In our experiments, with a fixed bond dimension $r_{\rm max}$, the objective function (NLL) decreases rapidly. We observe that when $r_{\rm max}>|\mathcal{T}|$, the UMPS-SD algorithm reaches the desired optimization regime within a very short time. Moreover, as $r_{\rm max}$ increases, the improvement in NLL becomes marginal. In contrast, when $r_{\rm max}$ is insufficient, the model suffers from limited representational capacity, leading to inferior reconstruction quality in certain samples. These effects are clearly illustrated in Fig. \ref{fig:descent}.

\section{Summary and outlook}
Tensor-network methods have had a profound impact on machine learning. However, conventional models such as MPS often suffer from slow updates and optimization difficulties. Motivated by these limitations, we propose an efficient UMPS update algorithm based on Riemannian optimization, which mitigates oscillations among equivalent optimal solutions during training and thereby significantly accelerates convergence.


Similar to optimization methods in Euclidean spaces, Riemannian optimization also requires selecting appropriate learning rates during gradient descent \cite{sakai2024general}. However, many machine learning algorithms struggle to achieve adaptive learning rate selection \cite{nar2018step}, particularly when the dataset is large, since the Armijo-type line search incurs substantial computational overhead. Inspired by Adam and Adagrad on Riemannian manifolds \cite{becigneul2018riemannian}, one may design adaptive learning-rate schemes that maintain first- and second-order moment estimates of gradients in the tangent space, assigning individualized learning rates to each parameter component on the manifold. Moreover, when stochastic mini-batches are used to estimate gradients \cite{mahoney2011randomized}, the variance of stochastic gradients on manifolds can severely affect convergence. Variance-reduction techniques \cite{zhang2016riemannian} provide a promising remedy for stabilizing and accelerating convergence in such settings.

At present, the UMPS model is applicable only to binarized images and cannot directly handle RGB images, mainly due to the limited expressive power of one-dimensional chain-structured MPS, whose entanglement structure is relatively simple \cite{vidal2003efficient}. Future work may explore more expressive two-dimensional tensor networks, such as projected entangled pair states (PEPS) \cite{orus2014practical,verstraete2006criticality}. Although PEPS inherently involves high computational complexity during optimization, recent advances such as variational Monte Carlo (VMC) methods for PEPS offer a promising alternative \cite{wu2025algorithms}. Furthermore, analyzing the impact of gauge freedom on variational optimization and adopting standardized gauge-fixing strategies can effectively suppress artificially low variational energies \cite{tang2025gauging}. Developing efficient approximate-contraction schemes and Riemannian-optimization strategies tailored to higher-order tensor networks represents a valuable and forward-looking research direction.

\section*{Acknowledgments}
The work of Z. Chen was partially supported by National Natural Science Foundation of China (No. 11701132), Natural Science Foundation of Zhejiang Province (No. LY22A010012) and Natural Science Foundation of Xinjiang Uygur Autonomous Region (No. 2024D01A09). The work of N. Wong was supported in part by the Theme-based Research Scheme (TRS) project T45-701/22-R of the Research Grants Council of Hong Kong, and in part by the AVNET-HKU Emerging Microelectronics and Ubiquitous Systems (EMUS) Lab.

\appendix
\section{Proof that manifolds $\mathcal{S}_{m\times n}$ and $\mathcal{M}_k$ are transversely intersecting}
\label{transverse}
Let $\mathcal{S}_{m\times n}$ and $\mathcal{M}_k$ be the unit sphere manifold and the fixed-rank-$k$ manifold as defined in (\ref{unit_sphere_manifold}) and (\ref{fix_rank_manifold}), respectively. Then
$\mathcal{S}_{m\times n}$ and $\mathcal{M}_k$ intersect transversely.

\textbf{Proof.}  
To prove the theorem, it suffices to show that for any $X \in \mathcal{S}_{m\times n} \cap \mathcal{M}_k$, the conclusion
$ T_X \mathcal{S}_{m\times n} + T_X \mathcal{M}_k = \mathbb{R}^{m \times n}$ holds, which is equivalent to 
$ (T_X \mathcal{S}_{m\times n})^\perp \cap (T_X \mathcal{M}_k)^\perp = \{0\}.$

For the unit sphere manifold $\mathcal{S}_{m\times n}$, we have
\begin{equation*}
(T_X \mathcal{S}_{m\times n})^\perp = \{ t X \mid t \in \mathbb{R} \}.
\end{equation*}
For any $X \in \mathcal{M}_k$, let $X = U V^\top$ be a full-rank factorization, where $U \in \mathbb{R}^{m \times k}, V \in \mathbb{R}^{n \times k}$. It follows that
\begin{equation*}
T_X \mathcal{M}_k = \{ \dot{U} V^\top + U \dot{V}^\top \mid 
\dot{U} \in \mathbb{R}^{m \times k}, \dot{V} \in \mathbb{R}^{n \times k} \}.
\end{equation*}

Let $W \in (T_X \mathcal{M}_k)^\perp$, i.e., 
\[
\langle W, \dot{U} V^\top + U \dot{V}^\top \rangle = 0, \quad
\forall \dot{U}\in \mathbb{R}^{m \times k}, \dot{V} \in \mathbb{R}^{n \times k}.
\]
This implies
\begin{align*}
\langle W, \dot{U} V^\top \rangle &= \mathrm{tr}(W^\top \dot{U} V^\top) 
= \mathrm{tr}(V^\top W^\top \dot{U}) = \langle W V, \dot{U} \rangle, \\
\langle W, U \dot{V}^\top \rangle &= \mathrm{tr}(W^\top U \dot{V}^\top) 
= \mathrm{tr}(\dot{V} U^\top W) = \langle W^\top U, \dot{V} \rangle.
\end{align*}
Since $\dot{U}, \dot{V}$ are arbitrary, we must have
$W V = 0$ and $W^\top U = 0$.
Let $U_\perp \in \mathbb{R}^{m \times (m-k)}$ and $V_\perp \in \mathbb{R}^{n \times (n-k)}$ be the orthogonal bases of $U$ and $V$, respectively. It follows that there exists $\Phi \in \mathbb{R}^{(m-k) \times (n-k)}$ such that $W = U_\perp \Phi V_\perp^\top$.

For any $ \Gamma \in (T_X \mathcal{S}_{m\times n})^\perp \cap (T_X \mathcal{M}_k)^\perp$, we have 
\begin{equation*}
\Gamma = t X = t U V^\top = U_\perp \Phi V_\perp^\top.
\end{equation*}
The left-hand side lies in $\mathrm{col}(U)$, while the right-hand side lies in 
$\mathrm{col}(U_\perp)$. Since $U$ and $U_\perp$ are orthogonal, we have
\[
t = 0, \quad \Phi = 0 \implies \Gamma = 0.
\]
This means
$
(T_X \mathcal{S}_{m\times n})^\perp \cap (T_X \mathcal{M}_k)^\perp = \{0\},
$
which implies $T_X \mathcal{S}_{m\times n} + T_X \mathcal{M}_k = \mathbb{R}^{m \times n}.$
Therefore, $\mathcal{S}_{m\times n}$ and $\mathcal{M}_k$ intersect transversely.

\balance


 





\begin{thebibliography}{1}
\bibliographystyle{IEEEtran}

\bibitem{eisert2010colloquium}
J. Eisert, M. Cramer, and M. B. Plenio, ``Colloquium: Area laws for the entanglement entropy,'' \textit{Rev. Mod. Phys.}, vol. 82, no. 1, pp. 277--306, 2010.

\bibitem{hastings2007area}
M. B. Hastings, ``An area law for one-dimensional quantum systems,'' \textit{J. Stat. Mech.}, no. 08, p. P08024, 2007.

\bibitem{kolda2009tensor}
T. G. Kolda and B. W. Bader, ``Tensor decompositions and applications,'' \textit{SIAM Rev.}, vol. 51, no. 3, pp. 455--500, 2009.

\bibitem{oseledets2011tensor}
I. V. Oseledets, ``Tensor-train decomposition,'' \textit{SIAM J. Sci. Comput.}, vol. 33, no. 5, pp. 2295--2317, 2011.

\bibitem{chen2018tensor} 
Z. Chen, K. Batselier, J. A. Suykens, and N. Wong, ``Parallelized tensor train learning of polynomial classifiers," \textit{IEEE Trans. Neural Netw. Learn. Syst.}, vol. 29, no. 10, pp. 4621--4632, 2018.

\bibitem{liu2023tensor}
J. Liu, S. Li, J. Zhang, and P. Zhang, ``Tensor networks for unsupervised machine learning,'' \textit{Phys. Rev. E}, vol. 107, no. 1, p. L012103, 2023.

\bibitem{orus2019tensor}
R. Orús, ``Tensor networks for complex quantum systems,'' \textit{Nat. Rev. Phys.}, vol. 1, no. 9, pp. 538--550, 2019.

\bibitem{ackley1985learning}
D. H. Ackley, G. E. Hinton, and T. J. Sejnowski, ``A learning algorithm for Boltzmann machines,'' \textit{Cognitive Sci.}, vol. 9, no. 1, pp. 147--169, 1985.

\bibitem{cheng2018information}
S. Cheng, J. Chen, and L. Wang, ``Information perspective to probabilistic modeling: Boltzmann machines versus born machines,'' \textit{Entropy}, vol. 20, no. 8, p. 583, 2018.

\bibitem{goodfellow2014generative}
I. Goodfellow \textit{et al.}, ``Generative adversarial nets,'' in \textit{Adv. Neural Inf. Process. Syst.}, vol. 27, 2014.

\bibitem{stokes2019probabilistic}
J. Stokes and J. Terilla, ``Probabilistic modeling with matrix product states,'' \textit{Entropy}, vol. 21, no. 12, p. 1236, 2019.

\bibitem{cichocki2016tensor}
A. Cichocki \textit{et al.}, ``Tensor networks for dimensionality reduction and large-scale optimization: Part 1 low-rank tensor decompositions,'' \textit{Found. Trends Mach. Learn.}, vol. 9, no. 4--5, pp. 249--429, 2016.

\bibitem{han2018unsupervised}
Z.-Y. Han \textit{et al.}, ``Unsupervised generative modeling using matrix product states,'' \textit{Phys. Rev. X}, vol. 8, no. 3, p. 031012, 2018.

\bibitem{li2018shortcut}
Z. Li and P. Zhang, ``Shortcut matrix product states and its applications,'' \textit{arXiv:1812.05248}, 2018.

\bibitem{glasser2019expressive}
I. Glasser \textit{et al.}, ``Expressive power of tensor-network factorizations for probabilistic modeling,'' in \textit{Adv. Neural Inf. Process. Syst.}, vol. 32, 2019.

\bibitem{cheng2019tree}
S. Cheng, L. Wang, T. Xiang, and P. Zhang, ``Tree tensor networks for generative modeling,'' \textit{Phys. Rev. B}, vol. 99, no. 15, p. 155131, 2019.

\bibitem{shi2006classical}
Y.-Y. Shi, L.-M. Duan, and G. Vidal, ``Classical simulation of quantum many-body systems with a tree tensor network,'' \textit{Phys. Rev. A}, vol. 74, no. 2, p. 022320, 2006.

\bibitem{vieijra2022generative}
T. Vieijra, L. Vanderstraeten, and F. Verstraete, ``Generative modeling with projected entangled-pair states,'' \textit{arXiv:2202.08177}, 2022.

\bibitem{liu2019machine}
D. Liu \textit{et al.}, ``Machine learning by unitary tensor network of hierarchical tree structure,'' \textit{New J. Phys.}, vol. 21, no. 7, p. 073059, 2019.

\bibitem{nie2025deep}
C. Nie, J. Chen, and Y. Chen, ``Deep tree tensor networks for image recognition,'' \textit{arXiv:2502.09928}, 2025.

\bibitem{zhou2024tensor}
W. Zhou \textit{et al.}, ``Tensor star tensor decomposition and its applications to higher-order compression and completion,'' \textit{arXiv:2403.10481}, 2024.

\bibitem{uschmajew2020geometric}
A. Uschmajew and B. Vandereycken, ``Geometric methods on low-rank matrix and tensor manifolds,'' in \textit{Handbook of Variational Methods for Nonlinear Geometric Data}. Springer, 2020, pp. 261--313.

\bibitem{yang2025space}
Y. Yang, B. Gao, and Y.-X. Yuan, ``A space-decoupling framework for optimization on bounded-rank matrices with orthogonally invariant constraints,'' \textit{Math. Program.}, 2026, doi: 10.1007/s10107-026-02331-7.

\bibitem{absil2008optimization}
P.-A. Absil, R. Mahony, and R. Sepulchre, \textit{Optimization Algorithms on Matrix Manifolds}. Princeton, NJ, USA: Princeton Univ. Press, 2008.

\bibitem{klemetsen2022neural}
C. B. Klemetsen, ``Neural networks on low-rank and Stiefel manifolds,'' M.S. thesis, NTNU, 2022.

\bibitem{peng2025normalized}
R. Peng, C. Zhu, B. Gao, X. Wang, and Y.-X. Yuan, ``Normalized tensor train decomposition,'' \textit{arXiv:2511.04369}, 2025.

\bibitem{saragadam2024deeptensor}
V. Saragadam \textit{et al.}, ``DeepTensor: Low-rank tensor decomposition with deep network priors,'' \textit{IEEE Trans. Pattern Anal. Mach. Intell.}, vol. 46, no. 12, pp. 10337-10348, 2024.


\bibitem{jacob2020structured}
M. Jacob, M. P. Mani, and J. C. Ye, ``Structured low-rank algorithms: Theory, magnetic resonance applications, and links to machine learning,'' \textit{IEEE Signal Process. Mag.}, vol. 37, no. 1, pp. 54--68, 2020.

\bibitem{white1992density}
S. R. White, ``Density matrix formulation for quantum renormalization groups,'' \textit{Phys. Rev. Lett.}, vol. 69, no. 19, p. 2863, 1992.

\bibitem{liu2022tensor}
Y. Liu, J. Liu, Z. Long, and C. Zhu, \textit{Tensor Computation for Data Analysis}. Springer, 2022.

\bibitem{LM08}
A.S. Lewis and J. Malick. ``Alternating projections on manifolds," \textit{Math. Oper. Res.}, vol. 33, no. 1, pp. 216-234, 2008.

\bibitem{stoudenmire2012studying}
E. M. Stoudenmire and S. R. White, ``Studying two-dimensional systems with the density matrix renormalization group,'' in \textit{Annu. Rev. Condens. Matter Phys.}, vol. 3, no. 1, 2012, pp. 111--128.

\bibitem{fischer2012introduction}
A. Fischer and C. Igel, ``An introduction to restricted Boltzmann machines,'' in \textit{Proc. Iberoamerican Conf. Pattern Recognit.}, 2012, pp. 14--36.

\bibitem{orus2014practical}
R. Orús, ``A practical introduction to tensor networks: Matrix product states and projected entangled pair states,'' \textit{Ann. Phys.}, vol. 349, pp. 117--158, 2014.


\bibitem{sakai2024general}
H. Sakai and H. Iiduka, ``A general framework of Riemannian adaptive optimization methods with a convergence analysis,'' \textit{arXiv:2409.00859}, 2024.

\bibitem{nar2018step}
K. Nar and S. Sastry, ``Step size matters in deep learning,'' in \textit{Adv. Neural Inf. Process. Syst.}, vol. 31, 2018.

\bibitem{becigneul2018riemannian}
G. Bécigneul and O.-E. Ganea, ``Riemannian adaptive optimization methods,'' \textit{arXiv:1810.00760}, 2018.

\bibitem{mahoney2011randomized}
M. W. Mahoney, ``Randomized algorithms for matrices and data,'' \textit{Found. Trends Mach. Learn.}, vol. 3, no. 2, pp. 123--224, 2011.


\bibitem{zhang2016riemannian}
H. Zhang, S. J. Reddi, and S. Sra, ``Riemannian SVRG: Fast stochastic optimization on Riemannian manifolds,'' in \textit{Adv. Neural Inf. Process. Syst.}, vol. 29, 2016.

\bibitem{vidal2003efficient}
G. Vidal, ``Efficient classical simulation of slightly entangled quantum computations,'' \textit{Phys. Rev. Lett.}, vol. 91, no. 14, p. 147902, 2003.

\bibitem{verstraete2006criticality}
F. Verstraete, M. M. Wolf, D. Perez-Garcia, and J. I. Cirac, ``Criticality, the area law, and the computational power of projected entangled pair states,'' \textit{Phys. Rev. Lett.}, vol. 96, no. 22, p. 220601, 2006.


\bibitem{wu2025algorithms}
Y. Wu and Z. Dai, ``Algorithms for variational Monte Carlo calculations of fermion PEPS in the swap gates formulation,'' \textit{arXiv:2506.20106}, 2025.





\bibitem{tang2025gauging}
W. Tang, L. Vanderstraeten, and J. Haegeman, ``Gauging the variational optimization of projected entangled-pair states,'' \textit{arXiv:2508.10822}, 2025.

\end{thebibliography}
\end{document}